\newcommand{\R}{\mathcal{R}}
\newcommand{\ie}{\textit{i.e.}}
\newcommand{\eg}{\textit{e.g.}}
\newcommand{\Ex}{\mathbf{E}}
\newcommand{\argmax}{\text{argmax}}
\newcommand{\swofu}{\texttt{SW-UCB}\text{ algorithm}}
\newcommand{\bob}{\texttt{BOB}\text{ algorithm}}
\newtheorem{theorem}{Theorem}
\newtheorem{lemma}{Lemma}
\newtheorem{assumption}{Assumption}
\begin{document}

%

%

\twocolumn[

\aistatstitle{Learning to Optimize under Non-Stationarity}

\aistatsauthor{Wang Chi Cheung \And  David Simchi-Levi \And Ruihao Zhu}

\aistatsaddress{ISEM, NUS \And IDSS, MIT\And SDSC, MIT} 
]

\begin{abstract}
	We introduce algorithms that achieve state-of-the-art \emph{dynamic regret} bounds for non-stationary linear stochastic bandit setting. It captures natural applications such as dynamic pricing and ads allocation in a changing environment. We show how the difficulty posed by the non-stationarity can be overcome by a novel marriage between stochastic and adversarial bandits learning algorithms. Defining $d,B_T,$ and $T$ as the problem dimension, the \emph{variation budget}, and the total time horizon, respectively, our main contributions are the tuned Sliding Window UCB (\texttt{SW-UCB}) algorithm with optimal $\widetilde{O}(d^{2/3}(B_T+1)^{1/3}T^{2/3})$ dynamic regret, and the tuning free bandit-over-bandit (\texttt{BOB}) framework built on top of the \texttt{SW-UCB} algorithm with best $\widetilde{O}(d^{2/3}(B_T+1)^{1/4}T^{3/4})$ dynamic regret.
\end{abstract}

\section{Introduction}

Multi-armed bandit (MAB) problems are online problems with partial feedback, when the learner is subject to uncertainty in his/her learning environment. Traditionally, most MAB problems are studied in the stochastic \cite{ABF02} and adversarial \cite{ABFS02} environments. In the former, the model uncertainty is static and the partial feedback is corrupted by a mean zero random noise. The learner aims at estimating the latent static environment and converging to a static optimal decision. In the latter, the model is dynamically changed by an adversary. The learner strives to hedge against the changes, and compete favorably in comparison to certain benchmark policies.

While assuming a stochastic environment could be too simplistic in a changing world, sometimes the assumption of an adversarial environment could be too pessimistic. Recently, a stream of research works (see Related Works) focuses on MAB problems in a \emph{drifting} environment, which is a hybrid of a stochastic and an adversarial environment. Although the environment can be dynamically and adversarially changed, the total change (quantified by a suitable metric) in a $T$ step problem is upper bounded by $B_T~(= \Theta(T^{\rho})\text{ for some }\rho\in (0, 1))$, the \emph{variation budget}. The feedback is corrupted by a mean zero random noise. The aim is to minimize the \emph{dynamic regret}, which is the optimality gap compared to the sequence of (possibly dynamically changing) optimal decisions, by simultaneously estimating the current environment and hedging against future changes every time step. Most of the existing works for non-stationary bandits have focused on the the somewhat ideal case in which $B_T$ is known. In practice, however, $B_T$ is often not available ahead. Though some efforts have been made towards this direction \cite{KA16,LWAL18}, how to design algorithms with low dynamic regret when $B_T$ is unknown remains largely as a challenging problem.

In this paper, we design and analyze novel algorithms for the linear bandit problem in a drifting environment. Our main contributions are listed as follows.
\begin{itemize}[leftmargin=*,itemsep=-1.5mm,topsep=-1.5mm]
	\item When the variation budget $B_T$ is known, we characterize the lower bound of dynamic regret, and develop a tuned Sliding Window UCB (\texttt{SW-UCB}) algorithm with matched dynamic regret upper bound up to  logarithmic factors.
	\item When $B_T$ is unknown, we propose a novel Bandit-over-Bandit (\texttt{BOB}) framework that tunes \texttt{SW-UCB} adaptively. The application of \texttt{BOB} on \swofu~ achieves the best dependence on $T$ compared to existing literature.
\end{itemize} 
\vspace{-3mm}
\paragraph{Related Works. } MAB problems with stochastic and adversarial environments are extensively studied, as surveyed in \cite{BC12, LS18}. 
To model inter-dependence relationships among different arms, models for linear bandits in stochastic environments have been studied. In \cite{A02,DHK08,RT10,CLRS11,AYPS11}, UCB type algorithms for stochastic linear bandits were studied, and Abbasi-Yadkori et al. \cite{AYPS11} possessed the state-of-art algorithm for the problem. Thompson Sampling algorithms proposed in \cite{RVR14,AG13,AL17} are able to bypass the high computational complexities provided that one can efficiently sample from the posterior on the parameters and optimize the reward function accordingly. Unfortunately, achieving optimal regret bound via TS algorithms is possible only if the true prior over the reward vector is known. 

Authors of \cite{BGZ14,BGZ18} considered the $K$-armed bandits in a drifting environment. They achieved the tight dynamic regret bound $\tilde{O}((K B_T)^{1/3} T^{2/3})$ when $B_T$ is known. Wei et al. \cite{WHL16} provided refined regret bounds based on empirical variance estimation, assuming the knowledge of $B_T$. Subsequently, Karnin et al. \cite{KA16} considered the setting without knowing $B_T$ and $K=2$, and achieved a dynamic regret bound of $\tilde{O}(B_T^{0.18} T^{0.82} + T^{0.77})$. In a recent work, \cite{LWAL18} considered $K$-armed contextual bandits in drifting environments, and in particular demonstrated an improved bound $\tilde{O}(K B_T^{1/5}T^{4/5})$ for the $K$-armed bandit problem in drifting environments when $B_T$ is not known, among other results. \cite{KZ16} considered a dynamic pricing problem in a drifting environment with linear demands. Assuming a known variation budget $B_T,$ they proved an $\Omega (B_T^{1/3}T^{2/3} )$ dynamic regret lower bound and proposed a matching algorithm. When $B_T$ is not known, they designed an algorithm with $\tilde{O}(B_T T^{2/3})$ dynamic regret. In \cite{BGZ15}, a general problem of stochastic optimization under the known budgeted variation environment was studied. The authors presented various upper and lower bound in the full feedback settings. Finally, various online problems with full information feedback and drifting environments are studied in the literature \cite{CYLMLJZ12, JRSS15}. 

Apart from drifting environment, numerous research works consider the \emph{switching environment}, where the time horizon is partitioned into at most $S$ intervals, and it switches from one stochastic environment to another across different intervals. The partition is not known to the learner. Algorithms are designed for various bandits, assuming a known $S$ \cite{ABFS02, GM11, LWAL18}, or assuming an unknown $S$ \cite{KA16, LWAL18}. Notably, the Sliding Window UCB for the $K$-armed setting is first proposed by Garivier et al. \cite{GM11}, while it is only analyzed under switching environments.

Finally, it is worth pointing out that our Bandits-over-Bandits framework has connections with algorithms for online model selection and bandit corralling, see \eg, \cite{ALNS17} and references therein. This and similar techniques have been investigated under the context of non-stationary bandits in \cite{LWAL18,BGZ18}. Notwithstanding, existing works either have no theoretical guarantee or can only obtain sub-optimal dynamic regret bounds.
	
	\section{Problem Formulation}
	\label{sec:formulation}
	In this section, we introduce the notations to be used throughout the discussions and the model formulation.
	\subsection{Notation}
	Throughout the paper, all vectors are column vectors, unless specified otherwise. We define $[n]$ to be the set $\{1,2,\ldots,n\}$ for any positive integer $n.$ The notation $a:b$ is the abbreviation of consecutive indexes $a,a+1,\ldots, b.$ We use $\|\bm x\|$ to denote the Euclidean norm of a vector $\bm x\in\Re^d.$ 
	For a positive definite matrix $A\in \Re^{d\times d}$, we use $\|\bm x\|_A$ to denote the matrix norm $\sqrt{\bm{x}^{\top}A\bm x}$ of a vector $\bm x\in\Re^d.$ We also denote $x\vee y$ and $x\wedge y$ as the maximum and minimum between $x,y\in\Re,$ respectively. When logarithmic factors are omitted, we use $\widetilde{O}(\cdot)$ to denote function growth.
	\subsection{Learning Model}
In each round $t\in[T]$, a decision set $D_t\subseteq\Re^d$ is presented to the learner, and it has to choose an action $X_t\in D_t.$ Afterwards, the reward 
$Y_t=\langle X_t,\theta_t\rangle+\eta_t$
is revealed. Here, we allow $D_t$ to be chosen by an \emph{oblivious adversary} whose actions are independent of those of the learner, and can be determined before the protocol starts \cite{CBL06}. $\theta_t\in\Re^d$ is an unknown $d$-dimensional vector, and $\eta_t$ is a random noise drawn i.i.d. from an unknown sub-Gaussian distribution with variance proxy $R$. This implies $\Ex\left[\eta_t\right]=0$, and $\forall\lambda\in\Re$ we have
$\Ex\left[\exp\left(\lambda\eta_t\right)\right]\leq\exp\left(\frac{\lambda^2R^2}{2}\right).$
Following the convention of existing bandits literature \cite{AYPS11,AG13}, we assume there exist positive constants $L$ and $S,$ such that $\|X\|\leq L$  and $\|\theta_t\|\leq S$ holds for all $X\in D_t$ and all $t\in[T],$ and the problem instance is normalized so that $\left|\langle X,\theta_t\rangle\right|\leq 1$ for all $X\in D_t$ and $t\in[T].$

Instead of assuming the stochastic environment, where reward function remains stationary across the time horizon, we allow it to change over time. Specifically, we consider the general \emph{drifting environment:} the sum of $\ell_2$ differences of consecutive $\theta_t$'s should be bounded by some variation budget $B_T= \Theta(T^{\rho})\text{ for some }\rho\in (0, 1)$, \ie, \begin{align}\label{eq:variation_budget}\sum_{t=1}^{T-1}\left\|\theta_{t+1}-\theta_t\right\|\leq B_T.\end{align}
We again allow the $\theta_t$'s to be chosen adversarially by an oblivious adversary. We also denote the set of all possible obliviously selected sequences of $\theta_t$'s that satisfies inequality (\ref{eq:variation_budget}) as $\Theta(B_T).$

The learner's goal is to design a policy $\pi$ to maximize the cumulative reward, or equivalently to minimize the worst case cumulative regret against the optimal policy $\pi^*$, that has full knowledge of $\theta_t$'s. Denoting $x_t^*=\argmax_{x\in D_t}\langle x,\theta_t\rangle,$ the dynamic regret of a given policy $\pi$ is defined as
$
	\R_T(\pi)=\sup_{\theta_{1:T}\in\Theta(B_T)}\Ex\left[\sum_{t=1}^T\langle x_t^*-X_t,\theta_t\rangle\right],
$
where the expectation is taken with respect to the (possible) randomness of the policy.
	\section{Lower Bound}
	We first provide a lower bound on the the regret to characterize the best achievable regret.
	\begin{theorem}
		\label{theorem:lower_bound}
		For any $T\geq d,$ the dynamic regret of any policy $\pi$ satisfies $\R_T(\pi)=\Omega\left(d^{\frac{2}{3}}B_T^{\frac{1}{3}}T^{\frac{2}{3}}\right).$
	\end{theorem}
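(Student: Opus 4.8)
The plan is to establish the $\Omega\bigl(d^{2/3}B_T^{1/3}T^{2/3}\bigr)$ lower bound by constructing a family of hard instances and applying an information-theoretic argument, combining the standard stationary linear-bandit lower bound with a batching technique that exploits the variation budget. First I would partition the horizon $[T]$ into $\lceil T/H \rceil$ consecutive blocks, each of length $H$ (to be optimized at the end). Within each block I hold the parameter $\theta_t$ constant, so the adversary only ``pays'' variation when moving between blocks; if the per-block parameter is confined to a ball of radius $\Delta$ around some center, the total variation across all blocks is at most roughly $(T/H)\cdot\Delta$, and I will choose $H$ and $\Delta$ so that this is within the budget $B_T$.

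Next, I would invoke the known worst-case lower bound for a single stationary linear bandit instance. The plan is to pick the decision set $D_t$ to be (a subset of) the hypercube $\{-1,+1\}^d$ or the standard basis vectors scaled appropriately, and to let the hidden parameter in each block be $\theta = \Delta \cdot \bm\varepsilon$ for a sign (or coordinate) vector $\bm\varepsilon$ chosen uniformly at random and independently across blocks. On a single block of length $H$, no policy can identify the hidden signs faster than information allows, and the standard argument (via Pinsker's inequality / KL-divergence bounds on the observation distributions, as in the $\Omega(d\sqrt{T})$ linear-bandit lower bound) shows the learner must incur regret $\Omega\bigl(\min\{d\Delta H,\ d/\Delta\}\bigr)$, equivalently $\Omega(d\sqrt{H})$ when $\Delta$ is tuned to $\Theta(1/\sqrt{H})$ within that block. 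I would make this precise by writing out the per-coordinate decomposition: each coordinate contributes regret $\Omega(\Delta)$ per step unless its sign has been learned, and learning a coordinate's sign to confidence requires $\Omega(1/\Delta^2)$ observations of that coordinate, so over $H$ steps and $d$ coordinates the regret is bounded below by summing these contributions.

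Summing over the $T/H$ independent blocks, the total dynamic regret is $\Omega\bigl((T/H)\cdot d\sqrt{H}\bigr) = \Omega\bigl(d\, T/\sqrt{H}\bigr)$, subject to the variation constraint $(T/H)\cdot\Delta = (T/H)\cdot\Theta(1/\sqrt{H}) = \Theta\bigl(T/H^{3/2}\bigr) \le B_T$. Solving the constraint at equality gives $H = \Theta\bigl((T/B_T)^{2/3}\bigr)$, and substituting back yields a regret of $\Omega\bigl(d\,T/\sqrt{H}\bigr) = \Omega\bigl(d\, T \cdot (B_T/T)^{1/3}\bigr) = \Omega\bigl(d\, B_T^{1/3} T^{2/3}\bigr)$. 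To recover the correct $d^{2/3}$ dependence rather than $d$, I would more carefully balance the dimension: the sharp per-block bound trades off the number of coordinates the learner can afford to explore against the reward, so the tuned choice should set both $\Delta$ and $H$ as functions of $d$, giving the block regret $\Omega(d^{2/3} H^{?})$ and, after optimization against the budget, the claimed $\Omega\bigl(d^{2/3} B_T^{1/3} T^{2/3}\bigr)$.

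The main obstacle I anticipate is getting the dimension dependence exactly right, i.e.\ producing $d^{2/3}$ rather than a naive $d$ or $\sqrt{d}$. This requires a careful information-theoretic accounting: with the sub-Gaussian noise of variance proxy $R$, I must bound the KL-divergence between the observation distributions induced by two parameter vectors differing in a subset of coordinates, sum these divergences across the $T/H$ blocks using the chain rule, and then apply Pinsker or a Bretagnolle--Huber-type inequality to lower-bound the probability of misidentification. The delicate point is that the learner may concentrate its queries on a few coordinates, so the adversary's construction (and the choice of which coordinate directions carry the hidden signal, together with the per-coordinate magnitude $\Delta$) must be robust to such adaptive allocation. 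Handling this adaptivity correctly, while keeping the total variation within $B_T$, is where the bulk of the technical care will go; the block decomposition and the final optimization over $H$ are comparatively routine once the single-block bound is pinned down.
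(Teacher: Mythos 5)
Your overall strategy --- partition $[T]$ into blocks of length $H$, keep $\theta_t$ constant within each block, invoke the stationary linear-bandit lower bound $\Omega(d\sqrt{H})$ per block, sum to get $\Omega(dT/\sqrt{H})$, and optimize $H$ against the variation budget --- is exactly the paper's. The gap is in the step you flag yourself: you leave the dimension dependence unresolved (the ``$\Omega(d^{2/3}H^{?})$'' placeholder), and your first-pass computation yields $\Omega(d\,B_T^{1/3}T^{2/3})$, which is \emph{stronger} than the theorem and would contradict the paper's $\widetilde{O}(d^{2/3}B_T^{1/3}T^{2/3})$ upper bound --- a sign that the variation accounting is off. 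The error is that you charge only $\Delta$ of variation per block switch while setting the \emph{per-coordinate} magnitude to $\Delta=\Theta(1/\sqrt{H})$. Two corrections are needed. First, the $\ell_2$ distance between two sign vectors scaled coordinatewise by $\Delta$ is $\Theta(\Delta\sqrt{d})$, not $\Theta(\Delta)$. Second, and more importantly, under the normalization $\|X\|\leq L=O(1)$ (unit-ball actions), the hard stationary instance achieving per-block regret $\Omega(d\sqrt{H})$ requires per-coordinate magnitude $\Theta(\sqrt{d/H})$, i.e.\ $\theta\in\{\pm\sqrt{d/4H}\}^d$ with $\|\theta\|=\Theta(d/\sqrt{H})$ (this is the paper's Lemma \ref{lemma:lower_bound}); your $\Delta=\Theta(1/\sqrt{H})$ per coordinate is the right scaling for the hypercube action set of norm $\sqrt{d}$, which violates the normalization here.

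With the correct scaling, each block switch costs up to $d/\sqrt{H}$ in variation, so the budget constraint reads $(T/H)\cdot(d/\sqrt{H})\leq B_T$, i.e.\ $H\geq (dT)^{2/3}B_T^{-2/3}$, and the regret $\Omega(dT/\sqrt{H})$ then evaluates to $\Omega(d^{2/3}B_T^{1/3}T^{2/3})$. In other words, the $d^{2/3}$ does not come from a refined per-block information-theoretic argument about adaptive coordinate allocation, as you anticipate in your last paragraph; the per-block bound stays at the full $\Omega(d\sqrt{H})$, and the drop from $d$ to $d^{2/3}$ is driven entirely by the fact that the hard instance's parameter norm scales linearly in $d$, which inflates the variation cost of each switch and forces $H$ to grow like $d^{2/3}$. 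Once you fix the variation accounting, the rest of your argument goes through and coincides with the paper's proof. You should also state explicitly, as the paper does, that the adversary redraws the block parameters so that blocks are informationally decoupled, ensuring the stationary lower bound applies to each block even for a learner that carries information across blocks.
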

	\begin{proof}[Sketch Proof]
		The construction of the lower bound instance is similar to the approach of \cite{BGZ14}: nature divides the whole time horizon into $\lceil T/H\rceil$ blocks of equal length $H$ rounds (the last block can possibly have less than $H$ rounds). In each block, the nature initiates a new stationary linear bandit instance with parameters from the set $\{\pm\sqrt{d/4H}\}^d.$ Nature also chooses the parameter for a block in a way that depends only on the learner's policy, and the worst case regret is $\Omega(d\sqrt{H}).$ Since there is at least $\lfloor T/H\rfloor$ number of blocks, the total regret is $\Omega(dT/\sqrt{H}).$ By examining the variation budget constraint, we have that the smallest possible $H$ one can take is $\lceil{(dT)^{\frac{2}{3}}B_T^{-\frac{2}{3}}}\rceil.$ The statement then follows. Please refer to Section \ref{sec:theorem:lower_bound} for the complete proof.
	\end{proof}
	\section{Sliding Window Regularized Least Squares Estimator}
	\label{sec:swlse}
	As a preliminary, we introduce the sliding window regularized least squares estimator, which is the key tool in estimating the unknown parameters $\{\theta_t\}^T_{t=1}$. Despite the underlying non-stationarity, we show that the estimation error of this estimator can gracefully adapt to the parameter changes.
	
	Consider a sliding window of length $w,$ and consider the observation history $\{(X_s, Y_s)\}^{t-1}_{s = 1\vee (t-w)}$ during the time window $(1\vee (t-w)) : (t-1)$. The ridge regression problem with regularization parameter $\lambda~(>0)$ is stated below:
	\begin{align}
		\min_{\theta\in\Re^d} \lambda \left\|\theta\right\|^2 + \sum^{t-1}_{s = 1\vee (t-w)} (X_s^\top \theta - Y_s)^2.
	\end{align}
	Denote $\hat{\theta}_t$ as a solution to the regularized ridge regression problem, and define matrix $V_{t-1} := \lambda I + \sum^{t-1}_{s = 1\vee (t-w)} X_s X_s^\top$. The solution $\hat{\theta}_t$ has the following explicit expression:
	\begin{align}
		\label{eq:sw20}
		&\nonumber\hat{\theta}_t = V_{t-1}^{-1}\left( \sum^{t-1}_{s = 1\vee(t-w)}X_s Y_s \right) \\
		=& V_{t-1}^{-1}\left(\sum_{s=1\vee(t-w)}^{t-1}X_sX_s^{\top}\theta_s+\sum_{s=1\vee (t-w)}^{t-1}\eta_sX_s\right).
	\end{align}
	The difference $\hat{\theta}_t-\theta_t=$ has the following expression:
	\begin{align}
		\nonumber&V_{t-1}^{-1}\left(\sum_{s=1\vee(t-w)}^{t-1}X_sX_s^{\top}\theta_s+\sum_{s=1\vee(t-w)}^{t-1}\eta_sX_s\right)-\theta_t\\
		\nonumber=&V_{t-1}^{-1}\sum_{s=1\vee(t-w)}^{t-1}X_sX_s^{\top}\left(\theta_s-\theta_t\right)+V_{t-1}^{-1} \sum_{s=1\vee(t-w)}^{t-1}\eta_sX_s\\
		\label{eq:sw16}&-\lambda\theta_t ,
	\end{align}
	The first term on the right hand side of eq. (\ref{eq:sw16}) is the estimation inaccuracy due to the non-stationarity; while the second term is the estimation error due to random noise. We now upper bound the two terms separately, under the following regularity assumption made in \cite{FauryRAC21} over the decision sets $D_t$'s.
	\begin{assumption}\label{ass:reg}
		There exists an orthonormal basis $\Psi=(\psi_1,\ldots,\psi_d)$ such that for any $t\in[T]$ and any $X\in D_t,$ there exists a number $z\in\mathbb{R}$ and an $i\in[d]$ such that $X=z\cdot \psi_i.$ 
	\end{assumption}
		One can easily verify that this assumption holds in the multi-armed bandits case. Of course, this assumption allows for more general models than the multi-armed bandits setting as it still allows each of the $D_t$'s to have arbitrarily large number of actions. We now upper bound the first term in the $\ell_2$ sense.
	\begin{lemma}
		\label{lemma:sw}
		For any $t\in[T],$ we have
		\begin{align*}
			&\left\|V_{t-1}^{-1}\sum_{s=1\vee(t-w)}^{t-1}X_sX_s^{\top}\left(\theta_s-\theta_t\right)\right\|\\
			\leq&\sum^{t-1}_{s = 1\vee (t-w)}\left\|\theta_s-\theta_{s+1}\right\|.
		\end{align*}
	\end{lemma}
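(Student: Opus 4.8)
The plan is to reduce the claim to a per-summand matrix-norm inequality by exploiting the telescoping structure of the differences $\theta_s-\theta_t$. First I would write, for each $s$ with $1\vee(t-w)\le s\le t-1$, the identity $\theta_s-\theta_t=\sum_{j=s}^{t-1}(\theta_j-\theta_{j+1})$, substitute it into the inner sum, and interchange the order of summation over $s$ and $j$. Writing $m:=1\vee(t-w)$, this rewrites $\sum_{s=m}^{t-1}X_sX_s^{\top}(\theta_s-\theta_t)$ as $\sum_{j=m}^{t-1}A_j(\theta_j-\theta_{j+1})$, where $A_j:=\sum_{s=m}^{j}X_sX_s^{\top}$ is the partial Gram matrix accumulated up to time $j$. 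Applying $V_{t-1}^{-1}$ and the triangle inequality then gives $\|V_{t-1}^{-1}\sum_{s=m}^{t-1}X_sX_s^{\top}(\theta_s-\theta_t)\|\le\sum_{j=m}^{t-1}\|V_{t-1}^{-1}A_j(\theta_j-\theta_{j+1})\|$, so it suffices to establish the per-term contraction $\|V_{t-1}^{-1}A_j v\|\le\|v\|$ for $v=\theta_j-\theta_{j+1}$, since summing these over $j=m,\dots,t-1$ reproduces the right-hand side $\sum_{s=m}^{t-1}\|\theta_s-\theta_{s+1}\|$.

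The second step records the positive semidefinite comparison that should drive the per-term bound. Since $V_{t-1}=\lambda I+\sum_{s=m}^{t-1}X_sX_s^{\top}=\lambda I+A_j+\sum_{s=j+1}^{t-1}X_sX_s^{\top}$, and both $\lambda I$ and each omitted rank-one term $X_sX_s^{\top}$ are positive semidefinite, we obtain $0\preceq A_j\preceq V_{t-1}$ for every $j$. The target inequality $\|V_{t-1}^{-1}A_j v\|\le\|v\|$ is exactly the assertion that the induced Euclidean operator norm of $V_{t-1}^{-1}A_j$ is at most one, and I would attempt to deduce this solely from the comparison $A_j\preceq V_{t-1}$. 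A convenient intermediate fact is the quadratic-form consequence $A_jV_{t-1}^{-1}A_j\preceq A_j\preceq V_{t-1}$, obtained by conjugating $0\preceq V_{t-1}^{-1/2}A_jV_{t-1}^{-1/2}\preceq I$ and using that a matrix with eigenvalues in $[0,1]$ dominates its own square; this immediately yields the clean contraction $\|V_{t-1}^{-1}A_j v\|_{V_{t-1}}\le\|v\|_{V_{t-1}}$ in the $V_{t-1}$-weighted norm.

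The main obstacle is the passage from this weighted-norm contraction to the Euclidean statement actually claimed. The tempting shortcut is to note that $V_{t-1}^{-1}A_j$ is similar to the symmetric matrix $V_{t-1}^{-1/2}A_jV_{t-1}^{-1/2}$, whose eigenvalues lie in $[0,1]$, and to conclude that its operator norm is at most one; but $V_{t-1}^{-1}A_j$ is itself not symmetric, so its spectral radius does not control its operator norm, and this step needs genuine care rather than a one-line similarity argument. I therefore expect the decisive work to lie precisely in justifying the per-term Euclidean contraction $\|V_{t-1}^{-1}A_j\|\le 1$ from $0\preceq A_j\preceq V_{t-1}$ — the interplay between $\lambda$, the action norms, and the conditioning of $V_{t-1}$ entering here — whereas the telescoping rearrangement and the final summation in the first two steps are routine bookkeeping.
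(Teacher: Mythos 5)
Your reduction is exactly the one the paper uses: telescope $\theta_s-\theta_t$, interchange the two sums to get $\sum_{j}A_j(\theta_j-\theta_{j+1})$ with $A_j=\sum_{s=m}^{j}X_sX_s^{\top}$, apply the triangle inequality, and reduce to the per-term contraction $\|V_{t-1}^{-1}A_j v\|\le\|v\|$. Everything you actually prove is correct, including the weighted contraction $\|V_{t-1}^{-1}A_jv\|_{V_{t-1}}\le\|v\|_{V_{t-1}}$ via $A_jV_{t-1}^{-1}A_j\preceq A_j$. But you then stop, explicitly leaving the Euclidean bound $\|V_{t-1}^{-1}A_j\|\le 1$ unestablished, so as a proof the proposal is incomplete: the one step you defer is the entire content of the lemma.

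Your hesitation at that step is, however, precisely on target, and it is worth saying plainly why: the step cannot be repaired from $0\preceq A_j\preceq V_{t-1}$ alone. The paper's own proof closes the gap by asserting $\|My\|\le\lambda_{\max}(M)\|y\|$ for $M=V_{t-1}^{-1}A_j$, i.e.\ by identifying $\max_{\|z\|\le 1}\|Mz\|$ with $|\lambda_{\max}(M)|$; for a non-normal matrix this equates the largest singular value with the spectral radius, which is false. The ordering $A_j\preceq V_{t-1}$ controls only the eigenvalues of $V_{t-1}^{-1}A_j$, not its singular values, and the operator norm can in fact be of order $1/\sqrt{\lambda}$: in $d=2$ take $X_1=e_1$, $X_2=\beta(\cos\phi,\sin\phi)$ with $\beta$ large and $\sin^2\phi\approx\lambda$, let $\theta_1-\theta_2$ be parallel to $e_1$ and $\theta_2=\theta_3$; a direct computation gives $\|V_{2}^{-1}e_1\|\approx 1/(2\sqrt{\lambda})$, so for $\lambda<1/4$ the left-hand side of the lemma exceeds the right-hand side. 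Thus the per-term Euclidean contraction, and with it the lemma in the form stated, genuinely fails for small $\lambda$; any correct version must carry an extra factor growing with $wL^2/\lambda$ (this is a documented flaw in the published analysis, subsequently corrected at the cost of a weaker regret bound). In short, you have not proved the statement, but the reason you could not is that the decisive inequality is not provable, and the paper's own one-line argument for it is erroneous in exactly the way you suspected.
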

	\begin{proof}[Sketch Proof.] 
		Our analysis relies on bounding the maximum eigenvalue of $V_{t-1}^{-1}\sum_{s=1\vee(t-w)}^{p}X_sX_s^{\top}$ for each $p\in \{1\vee (t-w), \ldots, t-1\}$. Please refer to Section \ref{sec:lemma:sw} of appendix for the complete proof.
	\end{proof}
	Adopting the analysis in \cite{AYPS11}, we upper bound the second term in the matrix norm sense.
	\begin{lemma}[\cite{AYPS11}]
		\label{lemma:sw1}
		For any $t\in[T]$ and any $\delta\in[0,1],$ we have with probability at least $1-\delta,$
		\begin{align*}
			\left\|\sum_{s=1\vee(t-w)}^{t-1}\eta_sX_s-\lambda\theta_t\right\|_{V_{t-1}^{-1}}\leq&R\sqrt{d\ln\left(\frac{1+wL^2/\lambda}{\delta}\right)}\\&+\sqrt{\lambda}S.
		\end{align*}
	\end{lemma}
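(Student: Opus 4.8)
The plan is to split the quantity into a pure-noise martingale part and a regularization-bias part, dispose of the bias part with elementary norm inequalities, and invoke the self-normalized tail bound for vector-valued martingales of \cite{AYPS11} on the noise part. First I would apply the triangle inequality for the matrix norm $\|\cdot\|_{V_{t-1}^{-1}}$:
\begin{align*}
\left\|\sum_{s=1\vee(t-w)}^{t-1}\eta_sX_s-\lambda\theta_t\right\|_{V_{t-1}^{-1}}\leq\left\|\sum_{s=1\vee(t-w)}^{t-1}\eta_sX_s\right\|_{V_{t-1}^{-1}}+\lambda\left\|\theta_t\right\|_{V_{t-1}^{-1}}.
\end{align*}
For the regularization term, since $V_{t-1}=\lambda I+\sum_{s}X_sX_s^{\top}\succeq\lambda I$, we have $V_{t-1}^{-1}\preceq\lambda^{-1}I$, whence $\|\theta_t\|_{V_{t-1}^{-1}}\leq\lambda^{-1/2}\|\theta_t\|\leq\lambda^{-1/2}S$; multiplying by $\lambda$ produces the additive $\sqrt{\lambda}S$ in the claimed bound.

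The heart of the argument is the noise term. I would set up the filtration $\mathcal{F}_s$ under which $X_s$ is $\mathcal{F}_{s-1}$-measurable (the action is committed before $\eta_s$ is realized, consistent with the oblivious choice of $D_s$) and $\eta_s$ is conditionally $R$-sub-Gaussian, so that $\sum_s\eta_sX_s$ is a vector-valued martingale. Applying the self-normalized concentration inequality of \cite{AYPS11} to the shifted process started at time $1\vee(t-w)$, with base matrix $\lambda I$, gives that with probability at least $1-\delta$,
\begin{align*}
\left\|\sum_{s=1\vee(t-w)}^{t-1}\eta_sX_s\right\|_{V_{t-1}^{-1}}^2\leq 2R^2\ln\left(\frac{\det(V_{t-1})^{1/2}}{\lambda^{d/2}\,\delta}\right).
\end{align*}

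It remains to control the determinant ratio. Because the window contributes at most $w$ summands, each with $\|X_s\|\leq L$, the trace obeys $\tr(V_{t-1})\leq\lambda d+wL^2$, and applying AM--GM to the eigenvalues yields $\det(V_{t-1})\leq(\lambda+wL^2/d)^d$. Substituting, $\det(V_{t-1})^{1/2}\lambda^{-d/2}\leq(1+wL^2/(\lambda d))^{d/2}\leq(1+wL^2/\lambda)^{d/2}$, so the right-hand side above is at most $R^2 d\ln((1+wL^2/\lambda)/\delta)$ after collecting the logarithmic terms. Taking square roots and combining with the $\sqrt{\lambda}S$ contribution from the regularization term completes the proof.

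The main obstacle is really the bookkeeping around invoking \cite{AYPS11}: I would need to verify that restricting the sum and the Gram matrix to the sliding window $[1\vee(t-w),t-1]$ preserves the predictability and sub-Gaussianity hypotheses (it does, since the window uses only past, $\mathcal{F}_{s-1}$-measurable actions), and that their uniform-in-time guarantee specializes correctly to the fixed index $t$ demanded by the lemma. The only other point requiring care is matching the determinant-to-trace relaxation to the precise logarithmic form $d\ln((1+wL^2/\lambda)/\delta)$ stated here, where the factor $d$ pulled outside the logarithm comfortably absorbs the constant from the self-normalized bound.
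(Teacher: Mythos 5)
Your proposal is correct and follows essentially the same route as the paper, which offers no proof of its own but imports this lemma directly from \cite{AYPS11}: triangle inequality to split off the $\lambda\theta_t$ bias (bounded by $\sqrt{\lambda}S$ via $V_{t-1}\succeq\lambda I$), the self-normalized martingale concentration bound of \cite{AYPS11} for the windowed noise sum, and the determinant--trace relaxation $\det(V_{t-1})\leq(\lambda+wL^2/d)^d$, exactly as in the original derivation. Your closing remark about absorbing the constant $2$ into the factor $d$ is the same bookkeeping step \cite{AYPS11} themselves perform (it requires $d\geq 2$ to be literal, a slack inherited from their stated form), so nothing is missing relative to the cited source.
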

	From now on, we shall denote
	\begin{equation}\label{eq:sw_beta}
		\beta := R\sqrt{d\ln\left(\frac{1+wL^2/\lambda}{\delta}\right)}+\sqrt{\lambda}S
	\end{equation} 
	for the ease of presentation. With these two lemmas, we have the following deviation inequality type bound for the latent expected reward of any action $x\in D_t$ in any round $t.$ 
	\begin{theorem}
		\label{theorem:sw_deviation}
		For any $t\in[T]$ and any $\delta\in[0,1]$, with probability at least $1-\delta,$ it holds for all $x\in D_t$ that
		\begin{align*}
			&\left|x^\top ( \hat{\theta}_t - \theta_t)\right|\leq L \sum^{t-1}_{s = 1\vee (t-w)}\left\|\theta_s-\theta_{s+1}\right\|+\beta\left\|x\right\|_{V^{-1}_{t-1}}
		\end{align*}
		
	\end{theorem}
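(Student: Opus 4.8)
The plan is to start from the decomposition of $\hat{\theta}_t - \theta_t$ already recorded in eq.~(\ref{eq:sw16}), pair it against an arbitrary $x\in D_t$, and bound the two resulting pieces separately using the two preceding lemmas — one measured in the Euclidean norm and the other in the $V_{t-1}^{-1}$ matrix norm. Concretely, I would write $\hat{\theta}_t-\theta_t$ as the sum of the \emph{non-stationarity} term $V_{t-1}^{-1}\sum_{s}X_sX_s^{\top}(\theta_s-\theta_t)$ and the \emph{noise-plus-regularization} term $V_{t-1}^{-1}\big(\sum_{s}\eta_sX_s-\lambda\theta_t\big)$, take the inner product with $x$, and apply the triangle inequality to split $|x^{\top}(\hat{\theta}_t-\theta_t)|$ into two contributions. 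Each contribution is then matched to exactly one of the two lemmas.

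For the non-stationarity contribution I would use the plain Cauchy--Schwarz inequality $|x^{\top}u|\le \|x\|\,\|u\|$ with $u=V_{t-1}^{-1}\sum_{s}X_sX_s^{\top}(\theta_s-\theta_t)$, then invoke the deterministic bound of Lemma~\ref{lemma:sw} to replace $\|u\|$ by $\sum_{s=1\vee(t-w)}^{t-1}\|\theta_s-\theta_{s+1}\|$, and finally bound $\|x\|\le L$ from the problem assumptions. This produces the term $L\sum_{s=1\vee(t-w)}^{t-1}\|\theta_s-\theta_{s+1}\|$ exactly as stated.

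For the noise contribution the correct tool is the $V_{t-1}^{-1}$-weighted Cauchy--Schwarz inequality: setting $v=\sum_{s}\eta_sX_s-\lambda\theta_t$ and writing $x^{\top}V_{t-1}^{-1}v=\langle V_{t-1}^{-1/2}x,\,V_{t-1}^{-1/2}v\rangle$, one gets $|x^{\top}V_{t-1}^{-1}v|\le \|x\|_{V_{t-1}^{-1}}\,\|v\|_{V_{t-1}^{-1}}$. Lemma~\ref{lemma:sw1} then bounds $\|v\|_{V_{t-1}^{-1}}\le\beta$ with probability at least $1-\delta$, using the definition of $\beta$ in eq.~(\ref{eq:sw_beta}). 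Summing the two contributions yields the claimed inequality.

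I do not anticipate a genuine obstacle, since the statement is essentially a bilinear reassembly of the two lemmas; the proof is short. The one point deserving care is reconciling the \emph{uniform over all} $x\in D_t$ quantifier with the high-probability qualifier. The randomness enters only through Lemma~\ref{lemma:sw1}, whose bound on $\|v\|_{V_{t-1}^{-1}}$ does not depend on $x$; hence after conditioning on that single event both Cauchy--Schwarz steps are deterministic and hold pointwise for every $x$, so the final bound holds simultaneously for all $x\in D_t$ with no union bound. A secondary bookkeeping point is to keep the regularization term $-\lambda\theta_t$ grouped inside the $V_{t-1}^{-1}$ factor, matching the form of Lemma~\ref{lemma:sw1}, so that the two error sources remain cleanly separated across the two norms.
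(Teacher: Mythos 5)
Your proposal is correct and follows essentially the same route as the paper's proof: decompose $\hat{\theta}_t-\theta_t$ via eq.~(\ref{eq:sw16}), apply the triangle inequality, bound the non-stationarity term by plain Cauchy--Schwarz together with Lemma~\ref{lemma:sw} and $\|x\|\leq L$, and bound the noise term by the $V_{t-1}^{-1}$-weighted Cauchy--Schwarz inequality together with Lemma~\ref{lemma:sw1}. Your remark that the high-probability event from Lemma~\ref{lemma:sw1} is independent of $x$, so no union bound over $D_t$ is needed, is also consistent with (and makes explicit a point left implicit in) the paper's argument.
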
 
	\begin{proof}[Sketch Proof.]
		The proof is a direct application of Lemmas \ref{lemma:sw} and \ref{lemma:sw1}. Please refer to Section \ref{sec:theorem:sw_deviation} of the appendix for the complete proof.
	\end{proof}
	\section{Sliding Window-Upper Confidence Bound (\texttt{SW-UCB}) Algorithm: A First Order Optimal Strategy}
	\label{sec:swofu}
	In this section, we describe the Sliding Window Upper Confidence Bound (\texttt{SW-UCB}) algorithm. When the variation budget $B_T$ is known, we show that \swofu ~with a tuned window size achieves a dynamic regret bound which is optimal up to a multiplicative logarithmic factor. When the variation budget $B_T$ is unknown, we show that \swofu ~can still be implemented with a suitably chosen window size so that the regret dependency on $T$ is optimal, which still results in first order optimality in this case \cite{KZ16}.
	\subsection{Design Intuition}
	\label{sec:swofu_intuition}
	In the stochastic environment where the linear reward function is stationary, the well known UCB algorithm follows the principle of optimism in face of uncertainty. Under this principle, the learner selects the action that maximizes the UCB, or the value of ``mean plus confidence radius" \cite{ABF02}. We follow the principle by choosing in each round the action $X_t$ with the highest UCB, \ie,
	\begin{align}
		\label{eq:sw_policy}
		\nonumber X_t\nonumber=&\argmax_{x\in D_t}\left\{\langle x,\hat{\theta}_t\rangle\right.\\
		\nonumber&\left.+L \sum^{t-1}_{s = 1\vee (t-w)}\left\|\theta_s-\theta_{s+1}\right\|+\beta\left\|x\right\|_{V^{-1}_{t-1}}\right\}\\
		=&\argmax_{x\in D_t}\left\{\langle x,\hat{\theta}_t\rangle+\beta\left\|x\right\|_{V^{-1}_{t-1}} \right\}.
	\end{align}
	When the number of actions is moderate, the optimization problem (\ref{eq:sw_policy}) can be solved by an enumeration over all $x\in D_t.$ Upon selecting $X_t,$ we have
	\begin{align}
		\label{eq:sw14}
		\nonumber&\langle x^*_t,\hat{\theta}_t\rangle+L \sum^{t-1}_{s = 1\vee (t-w)}\left\|\theta_s-\theta_{s+1}\right\|+\beta\left\|x^*_t\right\|_{V^{-1}_{t-1}}\\
		\leq&\langle X_t,\hat{\theta}_t\rangle+L \sum^{t-1}_{s = 1\vee (t-w)}\left\|\theta_s-\theta_{s+1}\right\|+ \beta\left\|X_t\right\|_{V^{-1}_{t-1}},
	\end{align}
	by virtue of UCB. From Theorem \ref{theorem:sw_deviation}, we further have with probability at least $1-\delta,$
	\begin{align}
		\label{eq:sw18}
		\langle x^*_t,\theta_t - \hat{\theta}_t\rangle \leq L \sum^{t-1}_{s = 1\vee (t-w)}\left\|\theta_s-\theta_{s+1}\right\|+\beta\left\|x^*_t\right\|_{V^{-1}_{t-1}}, 
	\end{align}
	and
	\begin{align}
		\label{eq:sw19}
		\nonumber&\langle X_t,\hat{\theta}_t\rangle+L \sum^{t-1}_{s = 1\vee (t-w)}\left\|\theta_s-\theta_{s+1}\right\|+\beta\left\|X_t\right\|_{V^{-1}_{t-1}}\\
		\leq&\langle X_t,\theta_t\rangle+ 2L \sum^{t-1}_{s = 1\vee (t-w)}\left\|\theta_s-\theta_{s+1}\right\|+ 2\beta\left\|X_t\right\|_{V^{-1}_{t-1}} .
	\end{align}
	Combining inequalities (\ref{eq:sw14}), (\ref{eq:sw18}), and (\ref{eq:sw19}), we establish the following high probability upper bound for the expected per round regret, \ie, with probability $1-\delta,$
	\begin{equation}\label{eq:sw-ucb}
		\langle x^*_t - X_t, \theta_t\rangle \leq 2 L \sum^{t-1}_{s = 1\vee (t-w)}\left\|\theta_s-\theta_{s+1}\right\|+2\beta\left\|X_t\right\|_{V^{-1}_{t-1}} .
	\end{equation}
	The regret upper bound of the \swofu~(to be formalized in Theorem \ref{theorem:sw_main}) is thus
	\begin{align}
		2\nonumber&\sum_{t\in[T]}L \sum^{t-1}_{s = 1\vee (t-w)}\left\|\theta_s-\theta_{s+1}\right\|+\beta\left\|X_t\right\|_{V^{-1}_{t-1}} \\
		=&\widetilde{O}\left(wB_T+\frac{dT}{\sqrt{w}}\right).
	\end{align} 
	If $B_T$ is known, the learner can set $w=\lfloor d^{2/3}T^{2/3}B_T^{-2/3}\rfloor$ and achieve a regret upper bound $\widetilde{O} (d^{2/3}B_T^{1/3}T^{2/3} ).$ If $B_T$ is not known, which is often the case in practice, the learner can set $w=\lfloor (dT)^{2/3}\rfloor$ to obtain a regret upper bound $\widetilde{O}(d^{2/3}(B_T+1)T^{2/3}).$
	\subsection{Design Details}
	\label{sec:swofu_details}
	In this section, we describe the details of the \swofu. Following its design guideline, the \swofu~selects a positive regularization parameter $\lambda~(>0),$ and initializes $V_0=\lambda I.$ In each round $t,$ the \swofu~first computes the estimate $\hat{\theta}_t$ for $\theta_t$ according to eq. \ref{eq:sw20}, and then finds the action $X_t$ with largest UCB by solving the optimization problem (\ref{eq:sw_policy}). Afterwards, the corresponding reward $Y_t$ is observed. The pseudo-code of the \swofu~is shown in Algorithm \ref{alg:swofu}.
	\begin{algorithm}[!ht]
		\caption{\swofu}
		\label{alg:swofu}
		\begin{algorithmic}[1]
			\State \textbf{Input:} Sliding window size $w$, dimension $d,$ variance proxy of the noise terms $R,$ upper bound of all the actions' $\ell_2$ norms $L,$ upper bound of all the $\theta_t$'s $\ell_2$ norms $S,$ and regularization constant $\lambda.$
			\State \textbf{Initialization:} $V_0\leftarrow\lambda I.$
			\For{$t=1,\ldots,T$}
			\State{$\hat{\theta}_t\leftarrow V_{t-1}^{-1}\left(\sum_{s=1\vee(t-w)}^{t-1}X_sY_s\right).$}
			\State{$X_t\leftarrow\argmax_{x\in D_t}\left\{x^\top\hat{\theta}_t\right.$}
			\State{$\quad\left.+ \left\|x\right\|_{V^{-1}_{t-1}} \left[ R\sqrt{d\ln\left(\frac{1+wL^2/\lambda}{\delta}\right)}+ \sqrt{\lambda}S \right]\right\}.$}
			\State{$Y_t\leftarrow\langle X_t,\theta_t\rangle+\eta_t.$}
			\State{$V_t\leftarrow\lambda I+\sum_{s=1\vee(t-w+1)}^tX_sX_s^{\top}.$}
			\EndFor
		\end{algorithmic}
	\end{algorithm}
	\subsection{Regret Analysis}
	\label{sec:swofu_regret}
	We are now ready to formally state a regret upper bound of the \swofu.
	\begin{theorem}
		\label{theorem:sw_main}
		The dynamic regret of the \swofu~is upper bounded as
		$
			\R_T\left(\swofu\right)=\widetilde{O}\left(wB_T+\frac{dT}{\sqrt{w}}\right).
	$
		When $B_t~(>0)$ is known, by taking $w=O((dT)^{2/3}B_T^{-2/3}),$ the dynamic regret of the \swofu~is
	$
			\R_T\left(\swofu\right)=\widetilde{O}\left(d^{\frac{2}{3}}B_T^{\frac{1}{3}}T^{\frac{2}{3}}\right).
	$
		When $B_t$ is unknown, by taking $w=O((dT)^{2/3}),$ the dynamic regret of the \swofu~is
	$
			\R_T\left(\swofu\right)=\widetilde{O}\left(d^{\frac{2}{3}}\left(B_T+1\right)T^{\frac{2}{3}}\right).
	$
	\end{theorem}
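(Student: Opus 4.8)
The plan is to integrate the per-round bound (\ref{eq:sw-ucb}) over the horizon and then optimize the window length $w$. Summing (\ref{eq:sw-ucb}) over $t\in[T]$, the dynamic regret decomposes into a \emph{non-stationarity} term $2L\sum_{t\in[T]}\sum_{s=1\vee(t-w)}^{t-1}\|\theta_s-\theta_{s+1}\|$ and an \emph{estimation} term $2\beta\sum_{t\in[T]}\|X_t\|_{V_{t-1}^{-1}}$. Since (\ref{eq:sw-ucb}) holds per round only with probability $1-\delta$, I would union-bound over $t$ (replacing $\delta$ by $\delta/T$, which only inflates $\beta$ by a $\log T$ factor absorbed in $\widetilde{O}$) and take $\delta$ polynomially small in $T$, so that the failure event contributes at most $2\delta T=\widetilde O(1)$ via the trivial per-round bound $|\langle x_t^*-X_t,\theta_t\rangle|\le 2$. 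It then suffices to bound the two sums separately.

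For the non-stationarity term I would use a \emph{counting} argument: a fixed increment $\|\theta_s-\theta_{s+1}\|$ enters the inner sum for round $t$ exactly when $1\vee(t-w)\le s\le t-1$, i.e.\ for at most $w$ distinct values of $t$. Exchanging the order of summation gives $\sum_{t\in[T]}\sum_{s=1\vee(t-w)}^{t-1}\|\theta_s-\theta_{s+1}\|\le w\sum_{s=1}^{T-1}\|\theta_s-\theta_{s+1}\|\le wB_T$ by the variation-budget constraint (\ref{eq:variation_budget}), so this term is $\widetilde O(wB_T)$.

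The estimation term is where the main obstacle lies. By Cauchy--Schwarz, $\sum_{t\in[T]}\|X_t\|_{V_{t-1}^{-1}}\le\sqrt{T\sum_{t\in[T]}\|X_t\|_{V_{t-1}^{-1}}^2}$, so it suffices to bound $\sum_t\|X_t\|_{V_{t-1}^{-1}}^2$. In the stationary analysis this is the elliptical-potential (log-determinant) lemma, which crucially relies on $V_t$ being monotonically increasing so that $\det V_t$ telescopes. Here the \emph{sliding window} destroys monotonicity: when an observation exits the window its rank-one term is removed, so the determinant argument fails directly. The fix I would use is a \emph{block decomposition}: partition $[T]$ into $\lceil T/w\rceil$ consecutive blocks of length $w$, and for a round $t$ in block $j$ define the within-block Gram matrix $\tilde V_{t-1}:=\lambda I+\sum_{s\in\text{block }j,\,s\le t-1}X_sX_s^{\top}$. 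Because $t\le jw$ forces the window start $1\vee(t-w)$ to lie at or before the block start $(j-1)w+1$, the window of $V_{t-1}$ already contains all block-$j$ observations up to $t-1$; hence $V_{t-1}\succeq\tilde V_{t-1}$ and $\|X_t\|_{V_{t-1}^{-1}}\le\|X_t\|_{\tilde V_{t-1}^{-1}}$. Within each block $\tilde V$ \emph{is} monotone (it resets to $\lambda I$ and only accretes terms), so the standard potential lemma applies blockwise and yields $\sum_{t\in\text{block }j}\|X_t\|_{\tilde V_{t-1}^{-1}}^2=O(d\log(1+wL^2/(\lambda d)))$, taking $\lambda\ge L^2$ to drop the usual $\min\{1,\cdot\}$ truncation. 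Summing over the $\lceil T/w\rceil$ blocks gives $\sum_t\|X_t\|_{V_{t-1}^{-1}}^2=\widetilde O(dT/w)$, whence $\sum_t\|X_t\|_{V_{t-1}^{-1}}=\widetilde O(T\sqrt{d/w})$, and since $\beta=\widetilde O(\sqrt d)$ the estimation term is $\widetilde O(dT/\sqrt w)$.

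Combining the two terms gives $\R_T(\swofu)=\widetilde O(wB_T+dT/\sqrt w)$. Finally I would optimize over $w$: balancing the two terms (setting $wB_T\asymp dT/\sqrt w$, i.e.\ $w\asymp(dT)^{2/3}B_T^{-2/3}$) yields $\widetilde O(d^{2/3}B_T^{1/3}T^{2/3})$ when $B_T$ is known, while the budget-free choice $w=\lfloor(dT)^{2/3}\rfloor$ makes $dT/\sqrt w=\widetilde O((dT)^{2/3})$ and $wB_T=\widetilde O((dT)^{2/3}B_T)$, giving $\widetilde O(d^{2/3}(B_T+1)T^{2/3})$, as claimed.
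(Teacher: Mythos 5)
Your proposal is correct and follows essentially the same route as the paper's proof: the same decomposition into a non-stationarity term (bounded by $wB_T$ via swapping the order of summation) and an estimation term handled by Cauchy--Schwarz plus a blockwise elliptical-potential argument over $\lceil T/w\rceil$ blocks of length $w$, followed by the same optimization of $w$. The only cosmetic difference is that you justify the key blockwise domination $\|X_t\|^2_{V_{t-1}^{-1}}\le\|X_t\|^2_{\tilde V_{t-1}^{-1}}$ directly from $V_{t-1}\succeq\tilde V_{t-1}$ and the antitonicity of matrix inversion on positive definite matrices, whereas the paper derives the same inequality by iterating the Sherman--Morrison formula (its Lemma~\ref{lemma:sw2}); both arguments are valid.
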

	\begin{proof}[Sketch Proof.]
		The proof utilizes the fact that the per round regret of the \swofu~is upper bounded by the UCB of the chosen action, and decomposes the UCB into two separated terms according to Lemmas \ref{lemma:sw} and \ref{lemma:sw1}, \ie,
		\begin{align*}
			&\text{regret in round }t=\text{regret due to non-stationarity in}\\
			&\text{round }t+\text{regret due to estimation error in round }t.
		\end{align*}
		The first term can be upper bounded by a intuitive telescoping sum; while for the second term, although a similar quantity is analyzed by the authors of \cite{AYPS11} using a (beautiful) matrix telescoping technique under the stationary environment, we note that due to the ``forgetting principle" of the \swofu, we cannot directly adopt the technique. Our proof thus makes a novel use of the Sherman-Morrison formula to overcome the barrier. Please refer to Section \ref{sec:theorem:sw_main} of appendix for the complete proof.
	\end{proof}
	
	\section{Bandit-over-Bandit (\texttt{BOB}) Algorithm: Automatically Adapting to the Unknown Variation Budget}
	\label{sec:bob}
	In Section \ref{sec:swofu}, we have seen that, by properly tuning $w,$ the learner can achieve a first order optimal $\widetilde{O}\left(d^{2/3}(B_T+1)T^{2/3}\right)$ regret bound even if the knowledge of $B_T$ is not available. However, in the case of an unknown and large $B_T$, \ie, $B_T=\Omega(T^{1/3}),$ the bound becomes meaningless as it is linear in $T.$ To handle this case, we wish to design an online algorithm that incurs a dynamic regret of order $\widetilde{O}\left(d^{\nu}B_T^{1-\sigma}T^{\sigma}\right)$ for some $\nu\in[0,1]$ and $\sigma\in(0,1)$, without knowing $B_T$. Note from Theorem \ref{theorem:lower_bound}, no algorithm can achieve a dynamic regret of order $o(d^{2/3}B_T^{1/3}T^{2/3})$, so we must have $\sigma\geq\frac{2}{3}$. 
	In this section, we develop a novel Bandit-over-Bandit (\texttt{BOB}) algorithm that achieves a regret of $\tilde{O}(d^{2/3}B_T^{1/4}T^{3/4})$. Hence, (\texttt{BOB}) still has a dynamic regret sublinear in $T$ when $B_T = \Theta(T^{\rho})$ for any $\rho\in (0, 1)$ and $B_T$ is not known, unlike the \swofu .   
	\subsection{Design Challenges}
	\label{sec:bob_challenges}
	Reviewing Theorem \ref{theorem:sw_main}, we know that setting the window length $w$ to a fixed value \begin{align}\label{eq:w_opt}w^*=\left\lfloor (dT)^{2/3}(B_T+1)^{-2/3}\right\rfloor\end{align}
	can give us a $\widetilde{O}\left(d^{2/3}(B_T+1)^{1/3}T^{2/3}\right)$ regret bound. But when $B_T$ is not provided a priori, we need to also ``learn" the unknown $B_T$ in order to properly tune $w.$ In a more restrictive setting in which the differences between consecutive $\theta_t$'s follow some underlying stochastic process, one possible approach is applying a suitable machine learning technique to learn the underlying stochastic process at the beginning, and tune the parameter $w$ accordingly. In the more general setting, however, this strategy cannot work as the change between consecutive $\theta_t$'s can be arbitrary (or even adversarially) as long as the total variation is bounded by $B_T.$ 
	\subsection{Design Intuition}
	\label{sec:bob_intuition}
	The above mentioned observations as well as the established results motivate us to make use of the \swofu~as a sub-routine, and ``hedge" against the changes of $\theta_t$'s to identify a reasonable fixed window length \cite{ABFS02}. To this end, we describe the main idea of the Bandit-over-Bandit (\texttt{BOB}) algorithm. The \bob~divides the whole time horizon into $\lceil T/H\rceil$ blocks of equal length $H$ rounds (the last block can possibly have less than $H$ rounds), and specifies a set $J~\left(\subseteq[H]\right)$ from which each $w_i$ is drawn from. For each block $i\in\left[\lceil T/H\rceil\right]$, the \bob~first selects a window length $w_i~(\in J)$, and initiates a new copy of the \swofu~with the selected window length as a sub-routine to choose actions for this block. On top of this, the \bob~also maintains a separate algorithm for adversarial multi-armed bandits, \eg, the EXP3 algorithm, to govern the selection of window length for each block, and thus the name Bandit-over-Bandit. Here, the total reward of each block is used as feedback for the EXP3 algorithm.
	
	To determine $H$ and $J$, we first consider the regret of the \bob. Since the window length is constrained to be in $J,$ and is less than or equal to $H,$ $w^*$ is not necessarily the optimal window length in this case, and we hence denote the optimally tuned window length as $w^{\dag}.$ By design of the \bob, its regret can be decomposed as the regret of an algorithm that optimally tunes the window length $w_i=w^{\dag}$ for each block $i$ plus the loss due to learning the value $w^{\dag}$ with the EXP3 algorithm,
	\begin{align}
		\label{eq:bob_decompose}
		\nonumber&\Ex\left[\text{Regret}_T(\bob)\right]\\
		\nonumber=&\Ex\left[\sum_{t=1}^T\langle x_t^*,\theta_t\rangle-\sum_{t=1}^T\langle X_t,\theta_t\rangle\right]\\
		\nonumber=&\Ex\left[\sum_{t=1}^T\langle x_t^*,\theta_t\rangle-\sum_{i=1}^{\lceil T/H\rceil}\sum_{t=(i-1)H+1}^{i\cdot H\wedge T}\left\langle X_t\left(w^{\dag}\right),\theta_t\right\rangle\right]\\
		\nonumber&+\Ex\left[\sum_{i=1}^{\lceil T/H\rceil}\sum_{t=(i-1)H+1}^{i\cdot H\wedge T}\left\langle X_t\left(w^{\dag}\right),\theta_t\right\rangle\right.\\
		&~\left.-\sum_{i=1}^{\lceil T/H\rceil}\sum_{t=(i-1)H+1}^{i\cdot H\wedge T}\left\langle X_t\left(w_i\right),\theta_t\right\rangle\right].
	\end{align}
	Here for a round $t$ in block $i,$ $X_t(w)$ refers to the action selected in round $t$ by the \swofu~with window length $w\wedge (t-(i-1)H-1)$ initiated at the beginning of block $i.$ 
	
	By Theorem \ref{theorem:sw_main}, the first expectation in eq. (\ref{eq:bob_decompose}) can be upper bounded as
	\begin{align}
		\label{eq:bob_decompose1}
		&\nonumber\Ex\left[\sum_{t=1}^T\langle x_t^*,\theta_t\rangle-\sum_{i=1}^{\lceil T/H\rceil}\sum_{t=(i-1)H+1}^{i\cdot H\wedge T}\left\langle X_t\left(w^{\dag}\right),\theta_t\right\rangle\right]\\
		\nonumber=&\Ex\left[\sum_{i=1}^{\lceil T/H\rceil}\sum_{t=(i-1)H+1}^{i\cdot H\wedge T}\left\langle x^*_t-X_t\left(w^{\dag}\right),\theta_t\right\rangle\right]\\
		\nonumber=&\sum_{i=1}^{\lceil T/H\rceil}\widetilde{O}\left(w^{\dag}B_T(i)+\frac{dH}{\sqrt{w^{\dag}}}\right)\\
		=&\widetilde{O}\left(w^{\dag}B_T+\frac{dT}{\sqrt{w^{\dag}}}\right),
	\end{align}
	where $B_{T}(i)=\sum_{t=(i-1)H+1}^{(i\cdot H\wedge t)-1}\|\theta_{t}-\theta_{t+1}\|$ is the total variation in block $i.$ 
	
	We then turn to the second expectation in eq. (\ref{eq:bob_decompose}). We can easily see that the number of rounds for the EXP3 algorithm is $\lceil T/H\rceil$ and the number of possible values of $w_i$'s is $|J|.$ Denoting the maximum absolute sum of rewards of any block as  random variable $Q,$ the authors of \cite{ABFS02} gives the following regret bound.
	\begin{align}
		\label{eq:bob_decompose2}
		\nonumber&\Ex\left[\sum_{i=1}^{\lceil T/H\rceil}\sum_{t=(i-1)H+1}^{i\cdot H\wedge T}\left\langle X_t\left(w^{\dag}\right),\theta_t\right\rangle\right.\\
		\nonumber&~\left.-\sum_{i=1}^{\lceil T/H\rceil}\sum_{t=(i-1)H+1}^{i\cdot H\wedge T}\left\langle X_t\left(w_i\right),\theta_t\right\rangle\right]\\
		\leq&\Ex\left[\widetilde{O}\left(Q\sqrt{\frac{|J|T}{H}}\right)\right].
	\end{align} 
	To proceed, we have to give a high probability upper bound for $Q.$
	\begin{lemma}
		\label{lemma:bob}
	$
			\Pr\left(Q\leq H+2R\sqrt{H\ln\frac{T}{\sqrt{H}}}\right)\geq 1-\frac{2}{T}.
	$
	\end{lemma}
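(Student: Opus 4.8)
The plan is to separate each block's reward sum into a deterministic ``signal'' part, which I bound crudely by the block length, and a stochastic ``noise'' part, to which I apply a sub-Gaussian tail bound, and then to take a union bound over the $\lceil T/H\rceil$ blocks.

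First I would fix a block $i\in[\lceil T/H\rceil]$, whose rounds form the \emph{deterministic} interval $\{(i-1)H+1,\ldots,i\cdot H\wedge T\}$ of length $m_i\le H$. Writing $Y_t=\langle X_t,\theta_t\rangle+\eta_t$ and using the normalization $|\langle X_t,\theta_t\rangle|\le 1$, I bound the absolute block reward by
\[
\left|\sum_{t=(i-1)H+1}^{i\cdot H\wedge T} Y_t\right|\le m_i+\left|\sum_{t=(i-1)H+1}^{i\cdot H\wedge T}\eta_t\right|\le H+|S_i|,
\]
where $S_i:=\sum_{t}\eta_t$ is the noise sum over the block. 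Since the $\eta_t$ are i.i.d.\ mean-zero with $\Ex[\exp(\lambda\eta_t)]\le\exp(\lambda^2R^2/2)$, and since the index set of block $i$ is deterministic with at most $H$ elements, $S_i$ is itself sub-Gaussian with $\Ex[\exp(\lambda S_i)]\le\exp(\lambda^2 H R^2/2)$. Maximizing over blocks gives $Q\le H+\max_i|S_i|$.

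Next I would invoke the standard sub-Gaussian tail inequality $\Pr(|S_i|\ge a)\le 2\exp\!\left(-a^2/(2HR^2)\right)$ for each block and union bound over all $\lceil T/H\rceil$ of them to get
\[
\Pr\Bigl(\max_i|S_i|\ge a\Bigr)\le \left\lceil\tfrac{T}{H}\right\rceil\cdot 2\exp\!\Bigl(-\frac{a^2}{2HR^2}\Bigr).
\]
Finally I would plug in $a=2R\sqrt{H\ln(T/\sqrt H)}$, for which $a^2/(2HR^2)=\ln(T^2/H)$ and hence $\exp(-a^2/(2HR^2))=H/T^2$, so the right-hand side becomes $\lceil T/H\rceil\cdot 2H/T^2\le 2/T$. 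Combining with $Q\le H+\max_i|S_i|$ proves the claimed high-probability bound.

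The computation is routine; the only point needing care is the union-bound bookkeeping that pins the failure probability to exactly $2/T$. With the above choice of $a$ each block contributes tail mass $2H/T^2$, and over $T/H$ blocks this multiplies to $2/T$ when $H$ divides $T$; the rounding in the ceiling (together with the last, possibly shorter block, whose tail is only smaller) contributes lower-order slack that one absorbs into the stated constant. I would also emphasize the structural simplification that \emph{no} martingale or Azuma-type argument is required here: because the blocks are fixed time intervals and the $\eta_t$ are genuinely independent, each $S_i$ is literally a sum of independent sub-Gaussian variables, so the plain sub-Gaussian tail bound applies directly even though the actions $X_t$ are adapted to the noise history.
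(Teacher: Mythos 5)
Your proposal is correct and follows essentially the same route as the paper's proof: bound the deterministic part of each block's reward by $H$ via $|\langle X_t,\theta_t\rangle|\le 1$, apply the sub-Gaussian tail bound to the per-block noise sum to get failure probability $2H/T^2$, and union bound over the $\lceil T/H\rceil$ blocks. If anything, your write-up is more explicit than the paper's about the variance-proxy calculation for the block noise sum and about the rounding slack in the union bound.
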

	\begin{proof}[Sketch Proof]
		The proof makes use of the $R$-sub-Gaussian property of the noise terms as well as the union bound over all the blocks. Please refer to Section \ref{sec:lemma:bob} of the appendix for the complete proof.
	\end{proof}
	Note that the regret of our problem is at most $T,$ eq. (\ref{eq:bob_decompose2}) can be further upper bounded as
	\begin{align}
		\label{eq:bob_decompose3}
		\nonumber&\Ex\left[\sum_{i=1}^{\lceil T/H\rceil}\sum_{t=(i-1)H+1}^{i\cdot H\wedge T}\left\langle X_t\left(w^{\dag}\right),\theta_t\right\rangle\right.\\
		\nonumber&~\left.-\sum_{i=1}^{\lceil T/H\rceil}\sum_{t=(i-1)H+1}^{i\cdot H\wedge T}\left\langle X_t\left(w_i\right),\theta_t\right\rangle\right]\\
		\nonumber\leq&\Ex\left[\widetilde{O}\left(Q\sqrt{\frac{|J|T}{H}}\right)\middle| Q\leq H+2HR\sqrt{\ln T}\right]\\
		\nonumber&~\times\Pr\left(Q\leq H+2HR\sqrt{\ln T}\right)\\
		\nonumber&+\Ex\left[\widetilde{O}\left(Q\sqrt{\frac{|J|T}{H}}\right)\middle| Q\geq H+2HR\sqrt{\ln T}\right]\\
		\nonumber&~\times\Pr\left(Q\geq H+2HR\sqrt{\ln T}\right)\\
		\nonumber=&\widetilde{O}\left(\sqrt{H|J|T}\right)+T\cdot\frac{2}{T}\\
		=&\widetilde{O}\left(\sqrt{H|J|T}\right).
	\end{align} 
	Combining eq. (\ref{eq:bob_decompose}), (\ref{eq:bob_decompose1}), and (\ref{eq:bob_decompose3}), the regret of the \bob~is 
	\begin{align}
		\label{eq:bob_regret}&\R_T(\bob)=\widetilde{O}\left(w^{\dag}B_T+\frac{dT}{\sqrt{w^{\dag}}}+\sqrt{H|J|T}\right).
	\end{align}
	Eq. (\ref{eq:bob_regret}) exhibits a similar structure to the regret of the \swofu~as stated in Theorem \ref{theorem:sw_main}, and this immediately indicates a clear trade-off in the design of the block length $H.$ On one hand, $H$ should be small to control the regret incurred by the EXP3 algorithm in identifying $w^{\dag},$ \ie, the third term in eq. (\ref{eq:bob_regret}); on the other hand, $H$ should also be large enough so that $w^{\dag}$ can get close to $w^*=\lfloor (dT)^{2/3}(B_T+1)^{-2/3}\rfloor$ so that the sum of the first two terms in eq. (\ref{eq:bob_regret}) is minimized. A more careful inspection also reveals the tension in the design of $J.$ Obviously, we hope that $|J|$ is small, but we also wish $J$ to be dense enough so that it forms a cover to the set $H.$ Otherwise, even if $H$ is large and $w^{\dag}$ can approach $w^*,$ approximating $w^*$ with any element in $J$ can cause a major loss.
	
	These observations suggest the following choice of $J.$ 
	\begin{align}
		J=\left\{H^0,\left\lfloor H^{\frac{1}{\Delta}}\right\rfloor,\ldots, H\right\}
	\end{align}
	for some positive integer $\Delta.$ For the purpose of analysis, suppose the (unknown) parameter $w^{\dag}$ can be expressed as $\text{clip}_J\left(\lfloor d^{\epsilon}T^{\alpha}(B_T+1)^{-\alpha}\rfloor\right)$ with some $\alpha\in[0,1]$ and $\epsilon>0$ to be determined, where $\text{clip}_J(x)$ finds the largest element in $J$ that does not exceed $x.$ Notice that $|J|=\Delta+1,$ the regret of the \bob~then becomes
	\begin{align}
		&\nonumber\R_T(\bob)\\
		\nonumber=&\widetilde{O}\left(d^{\epsilon}\left(B_T+1\right)^{1-\alpha}T^{\alpha}H^{\frac{2}{\Delta}}\right.\\
		\nonumber&~\left.+d^{1-\frac{\epsilon}{2}}\left(B_T+1\right)^{\frac{\alpha}{2}}T^{1-\frac{\alpha}{2}}H^{\frac{2}{\Delta}} +\sqrt{HT\Delta }\right)\\
		=&\widetilde{O}\left(d^{\epsilon}\left(B_T+1\right)^{1-\alpha}T^{\alpha}\right. \nonumber\\
		\label{eq:bob_regret1}&~\left. +d^{1-\frac{\epsilon}{2}}\left(B_T+1\right)^{\frac{\alpha}{2}}T^{1-\frac{\alpha}{2}}+\sqrt{HT}\right),
	\end{align}
	where we have set $\Delta=\lceil\ln H\rceil$ in eq. (\ref{eq:bob_regret1}); Since $w^{\dag}\in J$ (or $w^{\dag}\leq H$), and $H$ should not depend on $B_T,$ we can set 
	\begin{align}H=\left\lfloor d^{\epsilon}T^{\alpha}\right\rfloor,\end{align} and the regret of the \bob~(to be formalized in Theorem \ref{theorem:bob}) is upper bounded as
	\begin{align}
		&\nonumber\R_T(\bob)\\
		\nonumber=&\widetilde{O}\left(d^{\epsilon}\left(B_T+1\right)^{1-\alpha}T^{\alpha}+d^{1-\frac{\epsilon}{2}}\left(B_T+1\right)^{\frac{\alpha}{2}}T^{1-\frac{\alpha}{2}}\right.\\
		\nonumber&\left.+d^{\frac{\epsilon}{2}}T^{\frac{1}{2}+\frac{\alpha}{2}}\right)\\
		=&\widetilde{O}\left(d^{\frac{2}{3}}\left(B_T+1\right)^{\frac{1}{4}}T^{\frac{3}{4}}\right).
	\end{align}
	Here, we have taken $\alpha=1/2$ and $\epsilon=2/3,$ but we have to emphasize that the choice of $w^{\dag},\alpha,$ and $\epsilon$ are purely for an analysis purpose. The only parameters that we need to design are 
	\begin{align}
		\label{eq:bob_parameters}
		&H=\left\lfloor d^{\frac{2}{3}}T^{\frac{1}{2}}\right\rfloor,\Delta=\lceil\ln H\rceil,J=\left\{1,\left\lfloor H^{\frac{1}{\Delta}}\right\rfloor,\ldots, H\right\},
	\end{align}
	which clearly do not depend on $B_T.$
	
	\subsection{Design Details}
	\label{sec:bob_details}
	We are now ready to describe the details of the \bob. With $H,\Delta$ and $J$ defined as eq. (\ref{eq:bob_parameters}), the \bob~additionally initiates the parameter 
	\begin{align}
		\label{eq:bob_parameters1}
		\nonumber&\gamma=\min\left\{1,\sqrt{\frac{(\Delta+1)\ln(\Delta+1)}{(e-1)\lceil T/H\rceil}}\right\},\\
		&s_{j,1}=1\quad\forall j=0,1,\ldots,\Delta.
	\end{align}
	for the EXP3 algorithm \cite{ABFS02}. The \bob~then divides the time horizon $T$ into $\lceil T/H\rceil$ blocks of length $H$ rounds (except for the last block, which can be less than $H$ rounds). At the beginning of each block $i\in\left[\lceil T/H\rceil\right],$ the \bob~first sets
	\begin{align}\label{eq:p_j_i}
		p_{j,i}=(1-\gamma)\frac{s_{j,i}}{\sum_{u=0}^{\Delta}s_{u,i}}+\frac{\gamma}{\Delta+1}\quad\forall j=0,1,\ldots,\Delta,
	\end{align}
	and then sets 
	$j_i=j\quad\text{with probability }p_{j,i}$ for all $j=0,\ldots,\Delta.$
	The selected window length is thus $w_i=\left\lfloor H^{j_i/\Delta}\right\rfloor.$ Afterwards, the \bob~selects actions $X_t$ by running the \swofu~with window length $w_i$ for each round $t$ in block $i,$ and the total collected reward is $
		\sum_{t=(i-1)H+1}^{i\cdot H\wedge T}Y_t=\sum_{t=(i-1)H+1}^{i\cdot H\wedge T}\langle X_t,\theta_t\rangle+\eta_t.$
	Finally, the rewards are rescaled by dividing $2H+4R\sqrt{H\ln (T/\sqrt{H})},$ and then added by $1/2$ so that it lies within $[0,1]$ with high probability, and the parameter $s_{j_i,i+1}$ is set to
	\begin{align}\label{eq:s_j_i}
		s_{j_i,i}\cdot\exp\left(\frac{\gamma}{(\Delta+1)p_{j_i,i}}\left(\frac{1}{2}+\frac{\sum_{t=(i-1)H+1}^{i\cdot H\wedge T}Y_t}{2H+4R\sqrt{H\ln\frac{T}{\sqrt{H}}})}\right)\right);
	\end{align}
	while $s_{u,i+1}$ is the same as $s_{u,i}$ for all $u\neq j_i.$ The pseudo-code of the \bob~is shown in Algorithm \ref{alg:bob}.
	\begin{algorithm}[!ht]
		\caption{\bob}
		\label{alg:bob}
		\begin{algorithmic}[1]
			\State \textbf{Input:} Time horizon $T$, the dimension $d,$ variance proxy of the noise terms $R,$ upper bound of all the actions' $\ell_2$ norms $L,$ upper bound of all the $\theta_t$'s $\ell_2$ norms $S,$ and a constant $\lambda.$
			\State \textbf{Initialize} $H, \Delta, J$ by eq. (\ref{eq:bob_parameters}), $\gamma, \{s_{j, 1}\}^{\Delta}_{j=0}$ by eq. (\ref{eq:bob_parameters1}).		
			\For{$i=1,2,\ldots,\lceil T/H\rceil$}
			\State Define distribution $(p_{j, i})^{\Delta}_{j=0}$ by eq. (\ref{eq:p_j_i}).		
			\State{Set $j_t\leftarrow j$ with probability $p_{j,i}$.}
			\State{$w_i\leftarrow\left\lfloor H^{j_t/\Delta}\right\rfloor.$}
			\State{$V_{(i-1)H}=\lambda I.$}
			\For{$t=(i-1)H+1,\ldots,i\cdot H\wedge T$}
			\State{$\hat{\theta}_t\leftarrow V_{t-1}^{-1}\left(\sum_{s=[(i-1)H+1]\vee(t-w_i)}^{t-1}X_sY_s\right).$}
			\State{Pull arm $X_t\leftarrow\argmax_{x\in D_t}\left\{x^\top\hat{\theta}_t \right.$}
			\Statex{$~\left.+ \left\|x\right\|_{V^{-1}_{t-1}} \left[ R\sqrt{d\ln\left(T\left(1+w_iL^2/\lambda\right)\right)}+ \sqrt{\lambda}S \right]\right\}.$}
			\State{Observe $Y_t = \langle X_t,\theta_t\rangle+\eta_t.$}
			\State{$V_t\leftarrow \lambda I+\sum_{s=[(i-1)H+1]\vee(t+1-w_i)}^{t}X_sX_s^{\top}.$}
			\EndFor
			\State{Define $s_{j_i,i+1}$ according to eq. (\ref{eq:s_j_i}})
			\State{Define $s_{u,i+1}\leftarrow s_{u,i}~\forall u\neq j_i$}
			\EndFor
		\end{algorithmic}
	\end{algorithm}
	\subsection{Regret Analysis}
	\label{sec:bob_regret}
	We are now ready to present the regret analysis of the \bob.
	\begin{theorem}
		\label{theorem:bob}
		The dynamic regret of the \bob~with the \swofu~as a sub-routine is
	$
			\R_T\left(\bob\right)=\widetilde{O}\left(d^{\frac{2}{3}}\left(B_T+1\right)^{\frac{1}{4}}T^{\frac{3}{4}}\right).
	$
	\end{theorem}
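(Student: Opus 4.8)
The plan is to formalize the decomposition and tuning already outlined in Section~\ref{sec:bob_intuition}, the crux being a rigorous justification that the block-wise restarts let us treat window-length selection as an adversarial bandit instance to which the EXP3 guarantee applies.

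First I would establish the algebraic regret decomposition of eq.~(\ref{eq:bob_decompose}) by adding and subtracting the (hypothetical) cumulative reward $\sum_i\sum_t \langle X_t(w^{\dag}),\theta_t\rangle$ of running a single fixed window length $w^{\dag}$ in every block. The first resulting term compares the dynamic optimum against this fixed-window oracle. Because the \bob~resets $V_{(i-1)H}=\lambda I$ at the start of each block, the actions $X_t(w^{\dag})$ inside block $i$ are exactly those of a fresh \swofu~run of horizon $H$ with variation budget $B_T(i)$ (the start-of-block truncation $w^{\dag}\wedge(t-(i-1)H-1)$ coincides with the standard $1\vee(t-w)$ clipping); applying Theorem~\ref{theorem:sw_main} block-by-block and summing yields eq.~(\ref{eq:bob_decompose1}), using $\sum_i B_T(i)\le B_T$ (the per-block variations are disjoint sub-sums of the global variation) and $\sum_i dH/\sqrt{w^{\dag}}=dT/\sqrt{w^{\dag}}$.

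For the second term I would invoke the EXP3 guarantee of \cite{ABFS02} over $\lceil T/H\rceil$ rounds and $|J|=\Delta+1$ arms, one per admissible window length. The key point is that the restart makes the total reward of block $i$ under a chosen window a legitimate (adversarial) bandit reward, observed only for the played arm, so EXP3's regret against the best fixed arm $w^{\dag}$ is $\widetilde{O}(Q\sqrt{|J|T/H})$ as in eq.~(\ref{eq:bob_decompose2}), where $Q$ is the largest per-block absolute reward. Since EXP3 requires rewards normalized to $[0,1]$, I would use Lemma~\ref{lemma:bob} to argue that the rescaling by $2H+4R\sqrt{H\ln(T/\sqrt{H})}$ keeps the feedback in $[0,1]$ with probability at least $1-2/T$. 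The conditioning argument of eq.~(\ref{eq:bob_decompose3}) then splits the expectation over this event and its complement; the complement contributes at most $T\cdot(2/T)=O(1)$ since the dynamic regret never exceeds $T$, giving the clean bound $\widetilde{O}(\sqrt{H|J|T})$.

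Combining the two parts gives eq.~(\ref{eq:bob_regret}), namely $\R_T(\bob)=\widetilde{O}(w^{\dag}B_T+dT/\sqrt{w^{\dag}}+\sqrt{H|J|T})$, and the final step is the tuning. Writing $w^{\dag}=\text{clip}_J(\lfloor d^{\epsilon}T^{\alpha}(B_T+1)^{-\alpha}\rfloor)$, the geometric grid $J=\{1,\lfloor H^{1/\Delta}\rfloor,\ldots,H\}$ with $\Delta=\lceil\ln H\rceil$ guarantees that clipping costs only a factor $H^{O(1/\Delta)}=O(1)$, so the first two terms reduce to $d^{\epsilon}(B_T+1)^{1-\alpha}T^{\alpha}$ and $d^{1-\epsilon/2}(B_T+1)^{\alpha/2}T^{1-\alpha/2}$ up to constants, while $|J|=\Delta+1=\widetilde{O}(1)$ makes the third term $\widetilde{O}(\sqrt{HT})$. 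Substituting $H=\lfloor d^{2/3}T^{1/2}\rfloor$, $\alpha=1/2$, $\epsilon=2/3$ makes the middle term $d^{2/3}(B_T+1)^{1/4}T^{3/4}$ dominate the others (using $B_T+1\le T$ to dominate the first term and $d^{1/3}T^{3/4}$ from $\sqrt{HT}$ trivially), yielding the claimed bound. I expect the main obstacle to be the EXP3 step: rigorously arguing that the block restarts decouple the sub-routine's per-block performance enough to treat window selection as a genuine adversarial bandit, and correctly handling the low-probability event where the reward falls outside the normalization range so that the EXP3 guarantee can be applied conditionally without breaking the overall bound.
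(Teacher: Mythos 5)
Your proposal is correct and follows essentially the same route as the paper: the paper's proof of Theorem~\ref{theorem:bob} is simply a pointer back to the decomposition, EXP3 bound, and tuning of Section~\ref{sec:bob_intuition}, which is exactly what you reconstruct (including the block-restart argument for applying Theorem~\ref{theorem:sw_main} per block, the conditioning on Lemma~\ref{lemma:bob} to control $Q$, and the choice $\alpha=1/2$, $\epsilon=2/3$, $H=\lfloor d^{2/3}T^{1/2}\rfloor$). Your explicit attention to the clipping loss $H^{O(1/\Delta)}=O(1)$ and to why the restarts make the EXP3 reduction legitimate is, if anything, slightly more careful than the paper's own treatment.
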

	\begin{proof}[Sketch Proof]
		The proof of the theorem essentially follows Section \ref{sec:bob_intuition}, and we thus omit it.
	\end{proof}

\section{Numerical Experiments}
As a complement to our theoretical results, we conduct numerical experiments on synthetic data to compare the regret performances of the \swofu~and the \bob~with a modified EXP3.S algorithm analyzed in \cite{BGZ18}. Note that the algorithms in \cite{BGZ18} are designed for the stochastic MAB setting, a special case of us, we follow the setup of \cite{BGZ18} for fair comparisons. Specifically, we consider a 2-armed bandit setting, and we vary $T$ from $3\times10^4$ to $2.4\times 10^5$ with a step size of $3\times10^4.$ We set $\theta_t$ to be the following sinusoidal process, \ie, $\forall t\in[T],$
$
\theta_t=\begin{pmatrix}
		0.5+0.3\sin\left({5B_T\pi t}/{T}\right)\\
		0.5+0.3\sin\left(\pi+{5B_T\pi t}/{T}\right)
	\end{pmatrix}.
$
The total variation of the $\theta_t$'s across the whole time horizon is upper bounded by $\sqrt{2}B_T=O(B_T).$ We also use i.i.d. normal distribution with $R=0.1$ for the noise terms. 
\paragraph{Known Constant Variation Budget.} We start from the known constant variation budget case, \ie, $B_T=1,$ to measure the regret growth of the two optimal algorithms, \ie,  the \swofu~and the modified EXP3.S algorithm, with respect to the total number of rounds. The log-log plot is shown in Fig. \ref{fig:const_var}. From the plot, we can see that the regret of \swofu~is only about $20\%$ of the regret of EXP3.S algorithm.
\begin{figure}[!ht]
	\vspace{-4mm}
	\centering
	\includegraphics[width=7cm,height=5cm]{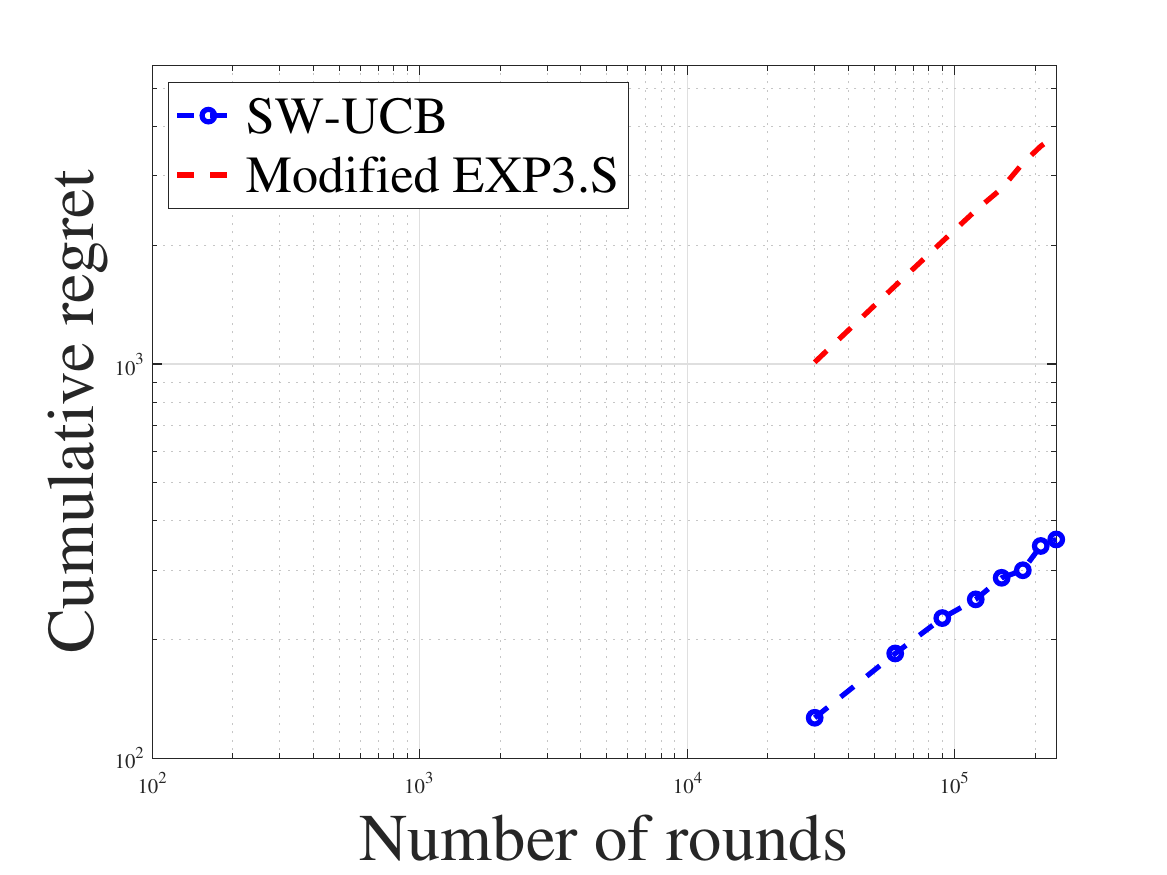}
		\vspace{-3mm}
	\caption{Log-log plot for $B_T=O(1).$}
	\vspace{-6mm}
	\label{fig:const_var}
\end{figure}
\paragraph{Unknown Time-Dependent Variation Budget.} We then turn to the more realistic time-dependent variation budget case, \ie, $B_T=T^{1/3}.$ As the modified EXP3.S algorithm does not apply to this setting, we compare the performances of the \swofu~and the \bob. The log-log plot is shown in Fig. \ref{fig:inc_var}. From the results, we verify that the slope of the regret growth of both algorithms roughly match the established results, and the regret of \bob's~is much smaller than that of the \swofu's.
\begin{figure}[!ht]
	\vspace{-4mm}
	\centering
	\includegraphics[width=7cm,height=5cm]{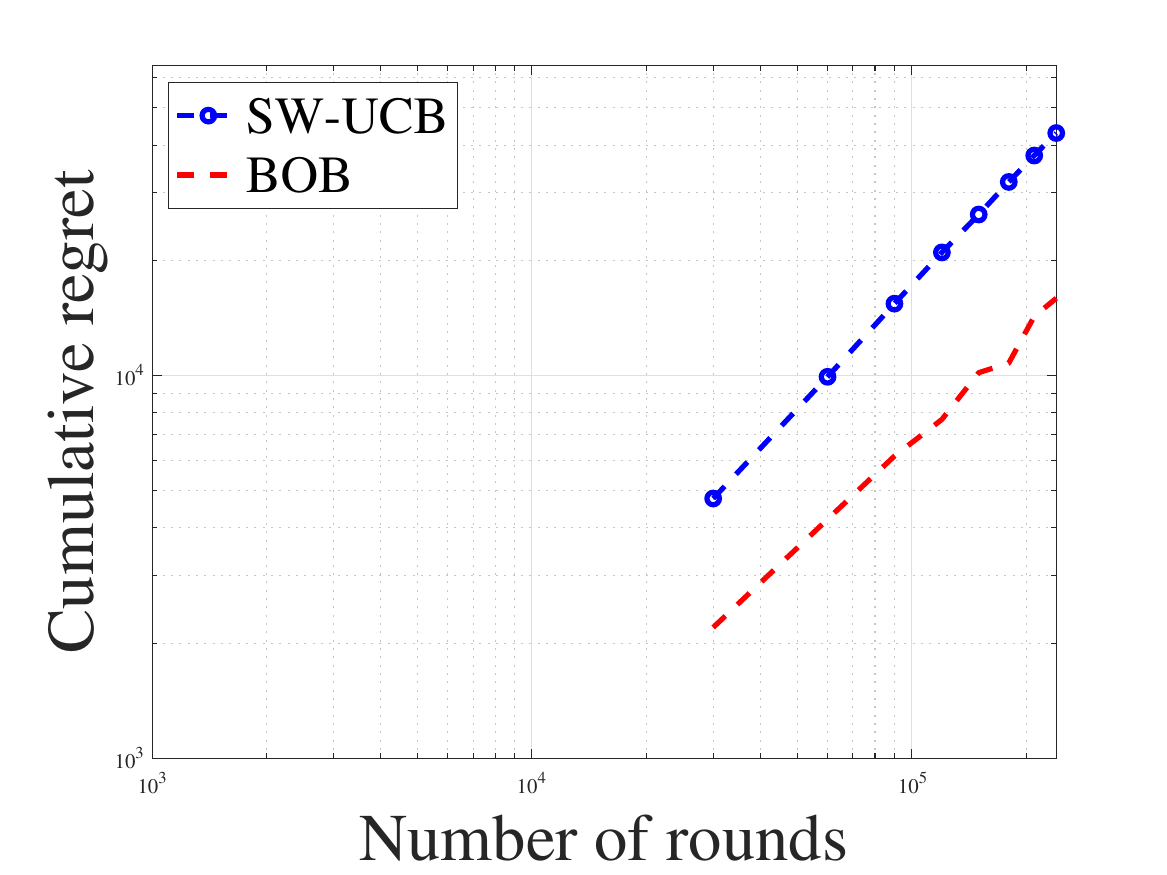}
		\vspace{-3mm}
	\caption{Log-log plot for $B_T=O(T^{1/3}).$}
	\vspace{-5mm}
	\label{fig:inc_var}
\end{figure}

\bibliographystyle{abbrv}
\bibliography{paperlist}
\newpage

\onecolumn
\appendix

	\section{Proof of Theorem \ref{theorem:lower_bound}}
	\label{sec:theorem:lower_bound}
	First, let's review the lower bound of the linear bandit setting. The linear bandit setting is almost identical to ours except that the $\theta_t$'s do not vary across rounds, and are equal to the same (unknown) $\theta,$ \ie, $\forall t\in[T]~\theta_t=\theta.$
	\begin{lemma}[\cite{LS18}]
		\label{lemma:lower_bound}
		For any $T_0\geq\sqrt{d}/2$ and let $D=\left\{x\in\Re^d:\|x\|\leq1\right\},$ then there exists a $\theta\in\left\{\pm\sqrt{d/4T_0}\right\}^d,$ such that the worst case regret of any algorithm for linear bandits with unknown parameter $\theta$ is $\Omega(d\sqrt{T_0}).$
	\end{lemma}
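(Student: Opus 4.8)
The plan is to prove this standard stochastic linear bandit lower bound by an information-theoretic (Assouad-type) argument that reduces the problem to $d$ parallel sign-estimation tasks. Since the noise is only required to be $R$-sub-Gaussian, I would instantiate it as $\mathcal{N}(0,R^2)$, a legitimate hard case. Draw a sign vector $\epsilon$ uniformly from $\{\pm 1\}^d$ and set $\theta_\epsilon=\Delta\epsilon$ with $\Delta=\sqrt{d/(4T_0)}$; the hypothesis $T_0\geq\sqrt{d}/2$ keeps the construction admissible. Over the unit ball $D$ the optimal action is $x^*_\epsilon=\epsilon/\sqrt{d}$ with value $\langle x^*_\epsilon,\theta_\epsilon\rangle=\Delta\sqrt{d}$. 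Because the worst case over $\theta$ dominates the uniform average, it suffices to lower bound $\Ex_\epsilon[\mathrm{Reg}]$ by $\Omega(d\sqrt{T_0})$, where $\mathrm{Reg}=\sum_{t=1}^{T_0}\langle x^*_\epsilon-X_t,\theta_\epsilon\rangle$.

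The crucial first step is a clean, termwise-nonnegative, per-coordinate lower bound on the instantaneous regret that survives the $\ell_2$ coupling imposed by the ball. Using the identity $\|\epsilon-\sqrt{d}\,X_t\|^2\leq 2\sqrt{d}\,(\sqrt{d}-\langle X_t,\epsilon\rangle)$, valid for every $\|X_t\|\leq 1$, I obtain
\[
\sqrt{d}-\langle X_t,\epsilon\rangle\ \geq\ \frac{1}{2\sqrt{d}}\sum_{i=1}^d\bigl(\epsilon_i-\sqrt{d}\,X_{t,i}\bigr)^2\ \geq\ \frac{1}{2\sqrt{d}}\,\bigl|\{i:\epsilon_i X_{t,i}\leq 0\}\bigr|,
\]
since any coordinate on which $X_{t,i}$ carries the wrong sign contributes at least $1$ to the squared distance. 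Writing $M_i$ for the number of rounds in which coordinate $i$ has the wrong sign, this gives $\mathrm{Reg}\geq\frac{\Delta}{2\sqrt{d}}\sum_i M_i$. I expect this reduction to be the main obstacle: the naive coordinate decomposition of the ball regret is not termwise nonnegative (the learner can concentrate its unit budget on coordinates it is confident about), and the quadratic identity is precisely what recovers a genuine per-coordinate loss to which a two-point testing argument applies.

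The remainder is a change-of-measure computation. Flipping the $i$-th sign to form $\epsilon^{(i)}$ alters the reward mean only through coordinate $i$, so by the divergence decomposition lemma $\mathrm{KL}(P_\epsilon\,\|\,P_{\epsilon^{(i)}})\leq\frac{2\Delta^2}{R^2}\sum_{t=1}^{T_0}\Ex_\epsilon[X_{t,i}^2]$; summing over $i$ with $\sum_i X_{t,i}^2=\|X_t\|^2\leq 1$ yields the total information budget $\sum_i\mathrm{KL}_i\leq 2\Delta^2 T_0/R^2=d/(2R^2)$. Applying Bretagnolle--Huber round by round to the events $\{\epsilon_i X_{t,i}\leq 0\}$ gives $\Ex_\epsilon[M_i]+\Ex_{\epsilon^{(i)}}[M_i^{(i)}]\geq\frac{T_0}{2}e^{-\mathrm{KL}_i}$. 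Averaging over $\epsilon$ (sign-flipping is a bijection on $\{\pm 1\}^d$, so both expectations average to the same quantity $\bar M_i$) and invoking Jensen for $e^{-x}$ twice — once over $\epsilon$ and once across the $d$ coordinates — I get $\frac{1}{d}\sum_i\bar M_i\geq\frac{T_0}{4}e^{-\frac{1}{d}\sum_i\mathrm{KL}_i}\geq\frac{T_0}{4}e^{-1/(2R^2)}$.

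Combining the two parts,
\[
\Ex_\epsilon[\mathrm{Reg}]\ \geq\ \frac{\Delta}{2\sqrt{d}}\cdot\frac{d\,T_0}{4}\,e^{-1/(2R^2)}\ =\ \frac{\Delta\sqrt{d}\,T_0}{8}\,e^{-1/(2R^2)},
\]
and substituting $\Delta\sqrt{d}=d/(2\sqrt{T_0})$ collapses this to $\frac{1}{16}e^{-1/(2R^2)}\,d\sqrt{T_0}=\Omega(d\sqrt{T_0})$. The balance is exactly the purpose of the choice $\Delta=\sqrt{d/(4T_0)}$: it forces the per-coordinate information to be $O(1)$, so the testing error stays bounded away from zero and a constant fraction of the $d$ signs necessarily remains unidentifiable, each such coordinate contributing $\Omega(\Delta\sqrt{d})$ per round over $\Omega(T_0)$ rounds.
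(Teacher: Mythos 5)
The paper does not prove this lemma at all --- it is imported verbatim from \cite{LS18} (Theorem 24.2 there) and used as a black box in the proof of Theorem \ref{theorem:lower_bound}. Your argument is a correct, self-contained reconstruction of exactly that source's proof: the quadratic identity $\|\epsilon-\sqrt{d}X_t\|^2\leq 2\sqrt{d}(\sqrt{d}-\langle X_t,\epsilon\rangle)$ reducing the ball regret to per-coordinate sign errors, the divergence decomposition with total information budget $d/(2R^2)$, and the averaged Bretagnolle--Huber/Jensen step are all the standard route, and your constants check out, so this matches the (cited) proof rather than offering a genuinely different one.
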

	Going back to the non-stationary environment, suppose nature divides the whole time horizon into $\lceil T/H\rceil$ blocks of equal length $H$ rounds (the last block can possibly have less than $H$ rounds), and each block is a decoupled linear bandit instance so that the knowledge of previous blocks cannot help the decision within the current block. Following Lemma \ref{lemma:lower_bound}, we restrict the sequence of $\theta_t$'s are drawn from the set $\left\{\pm\sqrt{d/4H}\right\}^d.$ Moreover, $\theta_t$'s remain fixed within a block, and can vary across different blocks, \ie,
	\begin{align}
		\forall i\in\left[\left\lceil \frac{T}{H}\right\rceil\right]\forall t_1,t_2\in[(i-1)H+1,i\cdot H\wedge T]\quad \theta_{t_1}=\theta_{t_2}.
	\end{align}
	We argue that even if the learner knows this additional information, it still incur a regret $\Omega(d^{2/3}B_T^{1/3}T^{2/3}).$ Note that different blocks are completely decoupled, and information is thus not passed across blocks. Therefore, the regret of each block is $\Omega\left(d\sqrt{H}\right),$ and the total regret is at least
	\begin{align}
		\left(\left\lceil \frac{T}{H}\right\rceil-1\right)\Omega\left(d\sqrt{H}\right)=\Omega\left(dTH^{-\frac{1}{2}}\right).
	\end{align} 
	Intuitively, if $H,$ the number of length of each block, is smaller, the worst case regret lower bound becomes larger. But too small a block length can result in a violation of the variation budget. So we work on the total variation of $\theta_t$'s to see how small can $H$ be. 
	The total variation of the $\theta_t$'s can be seen as the total variation across consecutive blocks as $\theta_t$ remains unchanged within a single block. Observe that for any pair of $\theta,\theta'\in\left\{\pm\sqrt{d/4H}\right\}^d,$ the $\ell_2$ difference between $\theta$ and $\theta'$ is upper bounded as
	\begin{align}
		\sqrt{\sum_{i=1}^d\frac{4d}{4H}}=\frac{d}{\sqrt{H}}
	\end{align}
	and there are at most $\lfloor T/H\rfloor$ changes across the whole time horizon, the total variation is at most 
	\begin{align}
		B=\frac{T}{H}\cdot\frac{d}{\sqrt{H}}={dTH^{-\frac{3}{2}}}.
	\end{align}
	By definition, we require that $B\leq B_T,$ and this indicates that
	\begin{align}
		H\geq(dT)^{\frac{2}{3}}B_T^{-\frac{2}{3}}.
	\end{align} 
	Taking $H=\left\lceil{(dT)^{\frac{2}{3}}B_T^{-\frac{2}{3}}}\right\rceil,$ the worst case regret is 
	\begin{align}
		\Omega\left(dT\left((dT)^{\frac{2}{3}}B_T^{-\frac{2}{3}}\right)^{-\frac{1}{2}}\right)=\Omega\left(d^{\frac{2}{3}}B_T^{\frac{1}{3}}T^{\frac{2}{3}}\right).
	\end{align}
	\section{Proof of Lemma \ref{lemma:sw}}
	\label{sec:lemma:sw}
	In the proof, we denote $B(1)$ as the unit Euclidean ball, and $\lambda_\text{max}(M)$ as the maximum eigenvalue of a square matrix $M$. By folklore, we know that $\lambda_\text{max}(M) = \max_{z\in B(1)}z^\top M z$. In addition, recall the definition that $V^{-1}_{t-1} = \lambda I + \sum^{t-1}_{s = 1\vee (t-w)} X_s X_s^\top$ We prove the Lemma as follows:
	\begin{align}
		\nonumber\left\|V_{t-1}^{-1}\sum_{s=1\vee(t-w)}^{t-1}X_sX_s^{\top}\left(\theta_s-\theta_t\right)\right\| = &\left\|V_{t-1}^{-1}\sum_{s=1\vee(t-w)}^{t-1}X_sX_s^{\top}\left[\sum^{t-1}_{p=s}\left(\theta_p-\theta_{p+1}\right)\right]\right\|\\
		\label{eq:sw} = & \left\|V_{t-1}^{-1} \sum^{t-1}_{p=1\vee(t-w)}  \sum_{s=1\vee(t-w)}^{p}X_sX_s^{\top}\left(\theta_p-\theta_{p+1}\right)\right\|\\
		\label{eq:sw1} \leq &  \sum^{t-1}_{p=1\vee(t-w)} \left\| V_{t-1}^{-1} \left(\sum_{s=1\vee(t-w)}^{p}X_sX_s^{\top}\right)\left(\theta_p-\theta_{p+1}\right)\right\|\\
		\label{eq:sw2} \leq &  \sum^{t-1}_{p=1\vee(t-w)} \lambda_\text{max}\left( V_{t-1}^{-1} \left(\sum_{s=1\vee(t-w)}^{p}X_sX_s^{\top}\right)\right)\left\|\theta_p-\theta_{p+1}\right\|\\
		\label{eq:sw3} \leq &  \sum^{t-1}_{p=1\vee(t-w)} \left\|\theta_p-\theta_{p+1}\right\|.
	\end{align}	
	Equality (\ref{eq:sw}) is by the observation that both sides of the equation is summing over the terms $X_s X^\top_s (\theta_p - \theta_{p+1})$ with indexes $(s, p)$ ranging over $\{(s, p) : 1\vee (t-w) \leq s\leq p \leq t-1\}$. Inequality (\ref{eq:sw1}) is by the triangle inequality. 
	
 Inequality (\ref{eq:sw2}) is by the fact that, for any matrix $M\in \mathbb{R}^{d\times d}$ with $\lambda_\text{max}(M)\geq 0$ and any vector $y\in \mathbb{R}^d$, we have $\left\|M y\right\|_2 \leq \sqrt{\lambda_\text{max}(M^2)}\left\|y\right\|_2$.  Applying the above claim with $M = V_{t-1}^{-1} \left(\sum_{s=1\vee(t-w)}^{p}X_sX_s^{\top}\right)$ and $y = \theta_p - \theta_{p+1}$ demonstrates inequality (\ref{eq:sw2}). 

Finally, for inequality (\ref{eq:sw3}), we denote the corresponding basis for each $X_s$ as $\psi_{i(s)},$ \ie, $X_s=z_s\psi_{i(s)}=z_s\Psi e_{i(s)},$ where $e_{i}$ is the $i^{\text{th}}$ standard orthonormal basis. Let $A_1=\sum_{s=1\vee(t-w)}^{t-1}e_{i(s)}e_{i(s)}^{\top}+\lambda I$ and $A_2=\sum_{s=1\vee(t-w)}^{p}e_{i(s)}e_{i(s)}^{\top},$ it is evident that $V_{t-1}=\Psi A_1\Psi^{\top}$ and $\sum_{s=1\vee(t-w)}^{p}X_sX_s^{\top}=\Psi A_2\Psi^{\top}.$ Therefore, we have 
\begin{align}
	\nonumber\lambda_\text{max}&\left( \left(\sum_{s=1\vee(t-w)}^{p}X_sX_s^{\top}\right)V_{t-1}^{-2} \left(\sum_{s=1\vee(t-w)}^{p}X_sX_s^{\top}\right)\right)=\lambda_\text{max}\left( \Psi A_2 \Psi^{\top} (\Psi A_1 \Psi^{\top})^{-2}\Psi A_2\Psi^{\top} \right)\\
	&=\lambda_\text{max}\left( \Psi A_2A^{-2}_1 A_2\Psi^{\top} \right)=\lambda_\text{max}\left( A_2A^{-2}_1 A_2 \right)\leq 1,
\end{align} 
where we have used the fact that both $A_1$ and $A_2$ are diagonal matrix in the last step. Altogether, the Lemma is proved.
	\section{Proof of Theorem \ref{theorem:sw_deviation}}
	\label{sec:theorem:sw_deviation}
	Fixed any $\delta\in[0,1],$ we have that for any $t\in[T]$ and any $x\in D_t,$
	\begin{align}
		\nonumber\left|x^\top ( \hat{\theta}_t - \theta_t)\right|=&\left| x^\top \left( V_{t-1}^{-1}\sum_{s=1\vee(t-w)}^{t-1}X_sX_s^{\top}\left(\theta_s-\theta_t\right)\right)+ x^\top V_{t-1}^{-1}\left(\sum_{s=1\vee(t-w)}^{t-1}\eta_sX_s-\lambda\theta_t\right) \right|  \nonumber\\
		\label{eq:sw13}\leq&\left| x^\top \left( V_{t-1}^{-1}\sum_{s=1\vee(t-w)}^{t-1}X_sX_s^{\top}\left(\theta_s-\theta_t\right)\right)\right|+\left| x^\top V_{t-1}^{-1}\left(\sum_{s=1\vee(t-w)}^{t-1}\eta_sX_s-\lambda\theta_t\right) \right| \\
		\leq& \left\|x\right\|\cdot \left\|V_{t-1}^{-1}\sum_{s=1\vee(t-w)}^{t-1}X_sX_s^{\top}\left(\theta_s-\theta_t\right)\right\| + \left\|x\right\|_{V^{-1}_{t-1}}\left\|\sum_{s=1\vee(t-w)}^{t-1}\eta_sX_s-\lambda\theta_t\right\|_{V_{t-1}^{-1}}\label{eq:sw_by_CS}\\
		\leq&L \sum^{t-1}_{s = 1\vee (t-w)}\left\|\theta_s-\theta_{s+1}\right\| +  \left\|x\right\|_{V^{-1}_{t-1}} \left[ R\sqrt{d\ln\left(\frac{1+wL^2/\lambda}{\delta}\right)}+\sqrt{\lambda}S \right].\label{eq:sw-ineq},
	\end{align}
	where inequality (\ref{eq:sw13}) uses triangular inequality, inequality (\ref{eq:sw_by_CS}) follows from Cauchy-Schwarz inequality, and inequality (\ref{eq:sw-ineq}) are consequences of Lemmas \ref{lemma:sw}, \ref{lemma:sw1}.
	\section{Proof of Theorem \ref{theorem:sw_main}}
	\label{sec:theorem:sw_main}
	In the proof, we choose $\lambda$ so that $\beta \geq 1$, for example by choosing $\lambda\geq 1/S^2$. 	
	By virtue of UCB, the regret in any round $t\in[T]$ is
	\begin{align}
		\label{eq:sw15} \langle x^*_t - X_t, \theta_t\rangle & \leq  L \sum^{t-1}_{s = 1\vee (t-w)}\left\|\theta_s-\theta_{s+1}\right\| +  \langle X_t, \hat{\theta}_t\rangle + \beta\left\|X_t\right\|_{V^{-1}_{t-1}} - \langle X_t, \theta_t\rangle\\
		\label{eq:sw12} & \leq 2 L \sum^{t-1}_{s = 1\vee (t-w)}\left\|\theta_s-\theta_{s+1}\right\| + 2\beta \left\|X_t\right\|_{V^{-1}_{t-1}}.
	\end{align}	
	Inequality (\ref{eq:sw15}) is by an application of our \swofu~established in equation (\ref{eq:sw-ucb}). Inequality (\ref{eq:sw12}) is by an application of inequality (\ref{eq:sw-ineq}), which bounds the difference $| \langle X_t, \hat{\theta}_t - \theta_t\rangle |$ from above. By the evident fact that $\langle X_t, \hat{\theta}_t - \theta_t\rangle \leq 2$, we have
	\begin{equation}\label{eq:sw17}
	\langle x^*_t - X_t, \theta_t\rangle\leq 2 L \sum^{t-1}_{s = 1\vee (t-w)}\left\|\theta_s-\theta_{s+1}\right\| + 2\beta \left(\left\|X_t\right\|_{V^{-1}_{t-1}}\wedge 1\right).
	\end{equation}
	Summing equation (\ref{eq:sw17}) over $1\leq t\leq T$, the regret of the \swofu~is upper bounded as
	\begin{align}
		\nonumber\Ex\left[\nonumber\text{Regret}_T\left(\swofu\right)\right]=&\sum_{t\in[T]}\langle x^*_t- X_t,\theta_t\rangle\\
		\nonumber\leq& 2 L \left[ \sum^T_{t = 1}\sum^{t-1}_{s = 1\vee (t-w)}\left\|\theta_s-\theta_{s+1}\right\|\right] + 2\beta \sum^T_{t=1}\left(\left\|X_t\right\|_{V^{-1}_{t-1}}\wedge 1\right)\\
		\nonumber = & 2 L \left[ \sum^T_{s = 1}\sum^{(s+w)\wedge T}_{t = s+1}\left\|\theta_s-\theta_{s+1}\right\|\right] + 2\beta \sum^T_{t=1}\left(\left\|X_t\right\|_{V^{-1}_{t-1}}\wedge 1\right)\\
		\label{eq:sw11} \leq & 2 L w B_t + 2\beta \sum^T_{t=1}\left(\left\|X_t\right\|_{V^{-1}_{t-1}}\wedge 1\right).
	\end{align}
	
	What's left is to upper bound the quantity $2\beta\sum_{t\in[T]}\left(1\wedge\left\|X_t\right\|_{V^{-1}_{t-1}}\right)$. Following the trick introduced by the authors of \cite{AYPS11}, we apply Cauchy-Schwarz inequality to the term $\sum_{t\in[T]}\left(1\wedge\left\|X_t\right\|_{V^{-1}_{t-1}}\right).$
	\begin{align}
		\sum_{t\in[T]}\left(1\wedge\left\|X_t\right\|_{V^{-1}_{t-1}}\right)\leq\sqrt{T}\sqrt{\sum_{t\in[T]}1\wedge\left\|X_t\right\|^2_{V^{-1}_{t-1}}}.
	\end{align}
	By dividing the whole time horizon into consecutive pieces of length $w,$ we have 
	\begin{align}
		\label{eq:sw9}
		\sqrt{\sum_{t\in[T]}1\wedge\left\|X_t\right\|^2_{V^{-1}_{t-1}}}\leq\sqrt{\sum_{i=0}^{\lceil T/w\rceil-1}\sum_{t=i\cdot w+1}^{(i+1) w}1\wedge\left\|X_t\right\|^2_{V^{-1}_{t-1}}}.
	\end{align}
	While a similar quantity has been analyzed by Lemma 11 of \cite{AYPS11}, we note that due to the fact that $V_{t}$'s are accumulated according to the sliding window principle, the key eq. (6) in Lemma 11's proof breaks, and thus the analysis of \cite{AYPS11} cannot be applied here. To this end, we state a technical lemma based on a novel use of the Sherman-Morrison formula.
	\begin{lemma}
		\label{lemma:sw2}
		For any $i\leq \lceil T/w\rceil-1,$ 
		\begin{align*}
			\sum_{t=i\cdot w+1}^{(i+1) w}1\wedge\left\|X_t\right\|^2_{V^{-1}_{t-1}}\leq\sum_{t=i\cdot w+1}^{(i+1) w}1\wedge\left\|X_t\right\|^2_{\overline{V}_{t-1}^{-1}},
		\end{align*}
		where 
		\begin{align}
			\overline{V}_{t-1}=\sum_{s=i\cdot w+1}^{t-1}X_sX_s^{\top}+\lambda I.
		\end{align}
	\end{lemma}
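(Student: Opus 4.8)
The plan is to prove the inequality termwise. It suffices to show that for every round $t$ in block $i$, i.e.\ $i\cdot w+1\le t\le (i+1)w$, one has $\|X_t\|^2_{V_{t-1}^{-1}}\le \|X_t\|^2_{\overline{V}_{t-1}^{-1}}$, because the truncation map $a\mapsto 1\wedge a$ is nondecreasing, so termwise domination survives the $1\wedge$ and summing over $t$ in the block then yields the stated bound.

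First I would record the structural relationship between the two matrices. For $i\ge 1$ and $t$ in the block we have $t-w\ge 1$, so $V_{t-1}=\lambda I+\sum_{s=t-w}^{t-1}X_sX_s^{\top}$, whereas $\overline{V}_{t-1}=\lambda I+\sum_{s=i\cdot w+1}^{t-1}X_sX_s^{\top}$. Since $t\le (i+1)w$ forces $t-w\le i\cdot w$, the summation window of $V_{t-1}$ contains that of $\overline{V}_{t-1}$, and therefore
\begin{align*}
V_{t-1}=\overline{V}_{t-1}+\sum_{s=t-w}^{i\cdot w}X_sX_s^{\top},
\end{align*}
a sum of rank-one PSD corrections stacked on top of $\overline{V}_{t-1}$. (For $i=0$ the two windows coincide and the inequality is an equality.)

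The core step is to show that each such rank-one correction can only shrink the quadratic form $X_t^{\top}(\cdot)^{-1}X_t$. Treating the extra terms as a sequence of rank-one updates $A\mapsto A+XX^{\top}$ applied successively on top of $\overline{V}_{t-1}$, the Sherman-Morrison formula gives
\begin{align*}
X_t^{\top}(A+XX^{\top})^{-1}X_t=X_t^{\top}A^{-1}X_t-\frac{\bigl(X_t^{\top}A^{-1}X\bigr)^2}{1+X^{\top}A^{-1}X}\le X_t^{\top}A^{-1}X_t,
\end{align*}
where the subtracted term is nonnegative because $A\succ 0$. Iterating this over all the extra indices $s\in\{t-w,\dots,i\cdot w\}$ interpolates from $\overline{V}_{t-1}^{-1}$ down to $V_{t-1}^{-1}$ and establishes $X_t^{\top}V_{t-1}^{-1}X_t\le X_t^{\top}\overline{V}_{t-1}^{-1}X_t$ for each fixed $t$. (Equivalently one could invoke operator monotonicity of the matrix inverse, $V_{t-1}\succeq\overline{V}_{t-1}\succ 0\Rightarrow V_{t-1}^{-1}\preceq\overline{V}_{t-1}^{-1}$; the Sherman-Morrison route makes exactly this fact elementary and self-contained, which is why the paper phrases it that way.)

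I expect the only genuine obstacle to be the index bookkeeping rather than any analytic difficulty. One must verify, for every $t$ in the block, that the sliding window $[t-w,t-1]$ really does contain the block-restricted window $[i\cdot w+1,t-1]$, and separately handle the boundary situations: the block $i=0$ (where $V_{t-1}=\overline{V}_{t-1}$ and the bound is tight) and the first round $t=i\cdot w+1$ (where $\overline{V}_{t-1}=\lambda I$ and the rank-one corrections account for the entire previous-block contribution). Once the containment is confirmed, the termwise domination together with monotonicity of $1\wedge(\cdot)$ immediately delivers the block-level inequality, after which the standard elliptical-potential argument of \cite{AYPS11} applies to the restarted matrices $\overline{V}_{t-1}$ within each block.
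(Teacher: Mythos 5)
Your proposal is correct and follows essentially the same route as the paper: both reduce to a termwise comparison inside each block, verify the index containment $1\vee(t-w)\le i\cdot w+1\le t$, and then peel off the extra rank-one terms one at a time via the Sherman--Morrison identity to show each update can only decrease the quadratic form $X_t^{\top}(\cdot)^{-1}X_t$. Your added remark that the whole step is just operator anti-monotonicity of matrix inversion ($V_{t-1}\succeq\overline{V}_{t-1}\succ 0\Rightarrow V_{t-1}^{-1}\preceq\overline{V}_{t-1}^{-1}$) is a valid one-line alternative, but it does not change the substance of the argument.
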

	\begin{proof}{Proof of Lemma \ref{lemma:sw2}.}
		For a fixed $i\leq \lceil T/w\rceil-1,$ 
		\begin{align}
			\nonumber\sum_{t=i\cdot w+1}^{(i+1) w}1\wedge\left\|X_t\right\|^2_{V^{-1}_{t-1}}=&\sum_{t=i\cdot w+1}^{(i+1) w}1\wedge X_t^{\top}V_{t-1}^{-1}X_t\\
			\label{eq:sw7}=&\sum_{t=i\cdot w+1}^{(i+1) w}1\wedge X_t^{\top}\left(\sum_{s=1\vee(t-w)}^{t-1}X_sX_s^{\top}+\lambda I\right)^{-1}X_t.
		\end{align}
		Note that $i\cdot w+1\geq1$ and $i\cdot w+1\geq t-w~\forall t\leq(i+1)w,$ we have
		\begin{align}
			i\cdot w+1\geq 1\vee (t-w).
		\end{align}
		Consider any $d$-by-$d$ positive definite matrix $A$ and $d$-dimensional vector $y,$ then by the Sherman-Morrison formula, the matrix
		\begin{align}
			B=A^{-1}-\left(A+yy^{\top}\right)^{-1}=A^{-1}-A^{-1}+\frac{A^{-1}yy^{\top}A^{-1}}{1+y^{\top}A^{-1}y}=\frac{A^{-1}yy^{\top}A^{-1}}{1+y^{\top}A^{-1}y}
		\end{align}
		is positive semi-definite. Therefore, for a given $t,$ we can iteratively apply this fact to obtain
		\begin{align}
			\nonumber&X_t^{\top}\left(\sum_{s=i\cdot w+1}^{t-1}X_sX_s^{\top}+\lambda I\right)^{-1}X_t\\
			\nonumber=&X_t^{\top}\left(\sum_{s=i\cdot w}^{t-1}X_sX_s^{\top}+\lambda I\right)^{-1}X_t+X_t^{\top}\left(\left(\sum_{s=i\cdot w+1}^{t-1}X_sX_s^{\top}+\lambda I\right)^{-1}-\left(\sum_{s=i\cdot w}^{t-1}X_sX_s^{\top}+\lambda I\right)^{-1}\right)X_t\\
			\nonumber=&X_t^{\top}\left(\sum_{s=i\cdot w}^{t-1}X_sX_s^{\top}+\lambda I\right)^{-1}X_t+X_t^{\top}\left(\left(\sum_{s=i\cdot w+1}^{t-1}X_sX_s^{\top}+\lambda I\right)^{-1}-\left(X_{i\cdot w}X_{i\cdot w}^{\top}+\sum_{s=i\cdot w+1}^{t-1}X_sX_s^{\top}+\lambda I\right)^{-1}\right)X_t\\
			\nonumber\geq&X_t^{\top}\left(\sum_{s=i\cdot w}^{t-1}X_sX_s^{\top}+\lambda I\right)^{-1}X_t\\
			\nonumber&\vdots\\
			\label{eq:sw8}\geq&X_t^{\top}\left(\sum_{s=1\vee(t-w)}^{t-1}X_sX_s^{\top}+\lambda I\right)^{-1}X_t.
		\end{align}
		Plugging inequality (\ref{eq:sw8}) to (\ref{eq:sw7}), we have
		\begin{align}
			\nonumber\sum_{t=i\cdot w+1}^{(i+1) w}1\wedge\left\|X_t\right\|^2_{V^{-1}_{t-1}}\leq&\sum_{t=i\cdot w+1}^{(i+1) w}1\wedge X_t^{\top}\left(\sum_{s=i\cdot w+1}^{t-1}X_sX_s^{\top}+\lambda I\right)^{-1}X_t\\
			\leq&\sum_{t=i\cdot w+1}^{(i+1) w}1\wedge\left\|X_t\right\|^2_{\overline{V}_{t-1}^{-1}},
		\end{align}
		which concludes the proof.
	\end{proof}
	From Lemma \ref{lemma:sw2} and eq. (\ref{eq:sw9}), we know that
	\begin{align}
		\nonumber2\beta\sum_{t\in[T]}\left(1\wedge\left\|X_t\right\|_{V^{-1}_{t-1}}\right)\leq&2\beta\sqrt{T}\cdot\sqrt{\sum_{i=0}^{\lceil T/w\rceil-1}\sum_{t=i\cdot w+1}^{(i+1) w}1\wedge\left\|X_t\right\|^2_{\overline{V}^{-1}_{t-1}}}\\
		\label{eq:sw10}\leq&2\beta\sqrt{T}\cdot\sqrt{\sum_{i=0}^{\lceil T/w\rceil-1}2d\ln\left(\frac{d\lambda+wL^2}{d\lambda}\right)}\\
		\nonumber\leq&2\beta T\sqrt{\frac{2d}{w}\ln\left(\frac{d\lambda+wL^2}{d\lambda}\right)}.
	\end{align}
	Here, eq. (\ref{eq:sw10}) follows from Lemma 11 of \cite{AYPS11}.
	
	Now putting these two parts to eq. (\ref{eq:sw11}), we have
	\begin{align}
		\Ex\left[\nonumber\text{Regret}_T\left(\swofu\right)\right]\leq&2LwB_T+2\beta T\sqrt{\frac{2d}{w}\ln\left(\frac{d\lambda+wL^2}{d\lambda}\right)}+2T\delta\\
		=&2LwB_T+\frac{2T}{\sqrt{w}}\left(R\sqrt{d\ln\left(\frac{1+wL^2/\lambda}{\delta}\right)}+\sqrt{\lambda}S\right)\sqrt{2d\ln\left(\frac{d\lambda+wL^2}{d\lambda}\right)}+2T\delta.
	\end{align}
	Now if $B_T$ is known, we can take $w=O\left((dT)^{2/3}B_t^{-2/3}\right)$ and $\delta=1/T,$ we have
	\begin{align}
		\Ex\left[\nonumber\text{Regret}_T\left(\swofu\right)\right]=\widetilde{O}\left(d^{\frac{2}{3}}B_T^{\frac{1}{3}}T^{\frac{2}{3}}\right);
	\end{align}
	while if $B_T$ is not unknown taking $w=O\left((dT)^{2/3}\right)$ and $\delta=1/T,$ we have
	\begin{align}
		\Ex\left[\nonumber\text{Regret}_T\left(\swofu\right)\right]=\widetilde{O}\left(d^{\frac{2}{3}}\left(B_T+1\right)T^{\frac{2}{3}}\right).
	\end{align}
	\section{Proof of Lemma \ref{lemma:bob}}
	\label{sec:lemma:bob}
	For any block $i,$ the absolute sum of rewards can be written as
	\begin{align*}
		\left|\sum_{t=(i-1)H+1}^{i\cdot H\wedge T}\langle X_t,\theta_t\rangle+\eta_t\right|\leq\sum_{t=(i-1)H+1}^{i\cdot H\wedge T}\left|\langle X_t,\theta_t\rangle\right|+\left|\sum_{t=(i-1)H+1}^{i\cdot H\wedge T}\eta_t\right|\leq H+\left|\sum_{t=(i-1)H+1}^{i\cdot H\wedge T}\eta_t\right|,
	\end{align*}
	where we have iteratively applied the triangular inequality as well as the fact that $\left|\langle X_t,\theta_t\rangle\right|\leq 1$ for all $t.$
	
	Now by property of the $R$-sub-Gaussian \cite{RH18}, we have the absolute value of the noise term $\eta_t$ exceeds $2R\sqrt{\ln T}$ for a fixed $t$ with probability at most $1/T^2$ \ie, 
	\begin{align}
		\Pr\left(\left|\sum_{t=(i-1)H+1}^{i\cdot H\wedge T}\eta_t\right|\geq 2R\sqrt{H\ln\frac{T}{\sqrt{H}}}\right)\leq\frac{2H}{T^2}.
	\end{align} 
	Applying a simple union bound, we have
	\begin{align}
		\Pr\left(\exists i\in\left\lceil\frac{T}{H}\right\rceil:\left|\sum_{t=(i-1)H+1}^{i\cdot H\wedge T}\eta_t\right|\geq 2R\sqrt{H\ln\frac{T}{\sqrt{H}}}\right)\leq\sum_{i=1}^{\lceil T/H\rceil}\Pr\left(\left|\sum_{t=(i-1)H+1}^{i\cdot H\wedge T}\eta_t\right|\geq 2R\sqrt{H\ln\frac{T}{\sqrt{H}}}\right)\leq\frac{2}{T}.
	\end{align}
	Therefore, we have
	\begin{align}
		\Pr\left(Q\geq H+2R\sqrt{H\ln\frac{T}{\sqrt{H}}}\right)\leq\Pr\left(\exists i\in\left\lceil\frac{T}{H}\right\rceil:\left|\sum_{t=(i-1)H+1}^{i\cdot H\wedge T}\eta_t\right|\geq 2R\sqrt{H\ln\frac{T}{\sqrt{H}}}\right)\leq\frac{2}{T}.
	\end{align}
	The statement then follows.
\end{document}